\pgfplotsset{compat=1.18}
\theoremstyle{plain}
\newtheorem{theorem}{Theorem}[section]
\newtheorem{proposition}[theorem]{Proposition}
\newtheorem{corollary}[theorem]{Corollary}
\theoremstyle{definition}
\newtheorem{definition}[theorem]{Definition}
\newtheorem{assumption}[theorem]{Assumption}
\theoremstyle{remark}
\newcommand{\Corr}{\mathrm{Corr}}
\newcommand{\Truth}{T}
\newcommand{\Proxy}{P}
\newcommand{\Verify}{V}
\newcommand{\Score}{S}
\newcommand{\tc}{\rho}
\newcommand{\vpressure}{\Lambda}
\newcommand{\snr}{r}
\newcommand{\Ce}{C_{\mathrm{eff}}}
\newcommand{\Be}{\mathcal{B}_{\mathrm{eff}}}
\title{Preventing the Collapse of Peer Review Requires Verification-First AI}
\author[1,2]{Lei You}
\author[2]{Lele Cao}
\author[3,4]{Iryna Gurevych}
\affiliation[1]{Technical University of Denmark} 
\affiliation[2]{CSPaper @ Scholar7}
\affiliation[3]{Technical University Darmstadt}
\affiliation[4]{MBZUAI}
\date{January 2026}
\abstract{%
This paper argues that AI-assisted peer review should be verification-first rather than review-mimicking.
We propose \emph{truth-coupling}, i.e. how tightly venue scores track latent scientific truth, as the right objective for review tools.
We formalize two forces that drive a phase transition toward proxy-sovereign evaluation: \emph{verification pressure}, when claims outpace verification capacity, and \emph{signal shrinkage}, when real improvements become hard to separate from noise.
In a minimal model that mixes occasional high-fidelity checks with frequent proxy judgment, we derive an explicit coupling law and an incentive-collapse condition under which rational effort shifts from truth-seeking to proxy optimization, even when current decisions still appear reliable.
These results motivate actions for tool builders and program chairs: deploy AI as an adversarial auditor that generates auditable verification artifacts and expands effective verification bandwidth, rather than as a score predictor that amplifies claim inflation.
}
\begin{document}

\maketitle

\section{Introduction}

Peer review and venue selection act as the main feedback channel that couples scientific work to recognition and career outcomes.
This channel now operates under stress.
Submission volumes keep rising, reviewer time does not scale proportionally, and large language models (LLMs) reduce the marginal cost of producing paper-like text.
Recent work has explored LLMs as review generators, score predictors, or meta-review assistants \citep{zhou-etal-2024-llm,thelwall2025predictive,shin2025automatic,bao2021acceptance}.
A natural ``default goal'' in this line of work is imitation: we train or prompt a model to produce review text that reads like a human review, or to predict a human score.

Our analysis suggests that we should use AI-assisted peer review to increase truth-coupling by reducing \emph{verification friction} and expanding effective \emph{verification bandwidth}. We should \textbf{not} treat agreement with human review text or scores as the main objective, \textbf{even if} they come from a venue with historically low noise and high trustworthiness. A feasibility demonstrated in a recent work on automating paper--code consistency checks \cite{scicoqa2026}, complemented by efforts to use LLMs for review assistance and systematic error identification in papers \cite{liu2023reviewergpt,xi2025flaws}.

% \textbf{Position: AI-assisted peer review should be used to increase truth-coupling by reducing \emph{verification friction} and expanding effective \emph{verification bandwidth}}. From this position, we argue that we should \textbf{not} treat agreement with human review text or scores as the main objective, \textbf{even if} they come from a venue with historically low noise and high trustworthiness. This position is supported by the feasibility demonstrated in recent work on automating paper--code consistency checks \cite{scicoqa2026}, and is further complemented by efforts that use LLMs for review assistance and systematic error identification in papers \cite{liu2023reviewergpt,xi2025flaws}.

We argue that imitation targets the wrong object: for a fixed paper, a human review score is not a gold label for truth but an outcome of a bandwidth-constrained social process that relies on heuristics and proxy signals \citep{shah2024survey}.
A concrete warning sign is a NeurIPS 2024 experiment where an LLM-based checklist assistant improved compliance yet could be gamed by fabricated justifications \citep{goldberg2024checklist}, illustrating a broader pattern that once proxy signals become objectives, optimization shifts to proxies rather than what they were meant to measure \citep{goodhart1975problems,manheim2019categorizing,campbell1979assessing}.

\subsection{A Phase Transition View of Scientific Production}

We formalize a split that many researchers informally recognize.
Some subfields feel like \emph{truth-sovereign} regimes.
In these regimes, a small number of decisive checks, replications, or proofs can settle disputes, and recognition tends to track what is correct over time.
Other subfields feel like \emph{proxy-sovereign} regimes.
In these regimes, claims arrive faster than the community can verify them, so the evaluation process leans on easier-to-check signals, namely benchmark scores, formal compliance, fashionable phrasing, or dense-looking technicality.

We model this split as a phase transition driven by two forces.
First, \emph{verification friction} captures how costly it is for a community to produce decisive evidence per claim.
Second, \emph{signal shrinkage} captures how small true improvements become as a domain approaches an S-curve \citep{bloom2020ideas}.
When the true signal shrinks, even honest progress becomes a low signal-to-noise problem. Proxy signals turn more attractive as they are easier to produce and evaluate.

\subsection{Three Implications That Are Easy to Miss}
\label{sec:three-implications}

Our model yields three implications that are easy to miss. 
% They matter because they identify failure modes that look like individual bad behavior but are, in fact, structural.
% They also tell us what to optimize: not better-looking reviews, but tighter coupling between recognition and what is correct.

\noindent\textbf{Implication 1 (High prestige is a deceptive lagging indicator).}
A scientific venue can appear statistically healthy while being structurally doomed.
Our analysis shows that "high prestige" is merely a historical record of low verification pressure.
Therefore, fitting AI agents to mimic top-tier venues merely replicates past signals without inheriting the verification capacity required to withstand the coming surge in AI-generated claims.

\noindent\textbf{Implication 2 (Review-mimicking automates the collapse).} Training prompting AI to predict human scores scales the \emph{output} of evaluation (scores) without scaling the \emph{capacity} for scrutiny (verification). This does not solve the review crisis; it hides it. By decoupling ``getting a high score'' from ``passing a hard check,'' mimicking tools effectively remove the only friction that stopped authors from gaming the system, accelerating the drift into ``Zombie Science''.

\noindent\textbf{Implication 3 (The goal is pressure control, not score optimization).}
To survive the AI era, peer review must shift its objective from ``predicting quality'' to ``controlling verification pressure''.
The only legitimate role for AI is to act as an adversarial auditor that physically expands effective bandwidth by executing evidence. Any tool that merely generates text or scores without checking the underlying reality is simply inflating the bubble.

\subsection{What We Contribute}

We make four contributions.
First, we define verification pressure and signal shrinkage as the two main drivers of a phase transition from truth sovereignty to proxy sovereignty.
Second, we derive a closed-form expression for truth-coupling $\tc=\Corr(\Score,\Truth)$ under a simple but interpretable mixture model of verification and proxy evaluation.
Third, we analyze researcher incentives and show a collapse condition under which truth effort becomes privately irrational.
Fourth, we translate these results into design principles for AI-assisted peer review, and we outline an empirical agenda for measuring the key quantities in real venues.

\subsection{Roadmap}

Section~\ref{sec:universes} grounds the model in a sociological description of two regimes.
Sections~\ref{sec:model} and \ref{sec:theory} introduce the formal model and the main results.
Section~\ref{sec:ai} translates the model into verification-first design principles for AI-assisted review.
Section~\ref{sec:altviews} describes credible alternative views and explains where we agree and disagree.
Section~\ref{sec:call} synthesizes these findings into concrete policy recommendations.
% for conference organizers and the broader community.
Section~\ref{sec:conclusions} concludes the paper.
Appendix~\ref{sec:related} reviews relevant theory on metrics, incentives, and peer review automation.
% Proofs of theorems are detailed in Appendix~\ref{app:proofs}.

\section{Universe A and Universe B}
\label{sec:universes}

We use the terms \emph{Universe A} and \emph{Universe B} as shorthand for two regimes that can coexist inside the same conference or department of a university.
The labels are not moral categories.
They describe two different ways a field closes its feedback loop between work and recognition.

\subsection{Two Kinds of Sovereignty}

In Universe~A, \emph{truth is sovereign}.
A claim eventually lives or dies based on a relatively decisive interaction with the target of study, such as a proof, a falsifiable experiment, or a replication with high fidelity.
Human judgment still matters, but it matters mostly for allocating attention, not for deciding what is true.

In Universe~B, \emph{proxy signals are sovereign}.
Claims arrive faster than the community can verify them, so the system relies on low-cost signals that correlate with truth only weakly.
Examples include small benchmark gains, compliance checklists, citation cascades, or the appearance of technical depth.
When such signals decide careers, researchers rationally learn to optimize them. We treat citation cascades as one concrete example of a low-cost proxy signal, and we analyze this case in Appendices~\ref{app:citations-proxy}.

\subsection{Career Dynamics as An Observable Signature}

The two regimes also differ in how they shape careers.
In Universe~A, the main scarce resource is insight.
Insight is hard to outsource and hard to parallelize.
As a result, even senior researchers often stay close to the technical frontier.
A principal investigator (PI) may supervise students, but the PI usually keeps strong ownership of the key ideas and can personally verify core arguments.

In Universe~B, the main scarce resource is attention and throughput.
Outputs aggregate linearly across people, and the evaluation system rewards volume.
As a result, career success favors management of a large production pipeline.
Many successful PIs in Universe~B shift away from hands-on verification and toward fundraising, staffing, and strategic positioning.
This behavior is not a personal failure.
It is a predictable response to incentives in a proxy-sovereign regime \citep{holmstrom1991multitask,power1997audit}.

\subsection{Verification Friction and Evidence Compressibility}

A common confusion is to equate ``easy to verify'' with ``cheap to verify.''
Some of the most truth-sovereign disciplines have high absolute verification cost.
A major mathematical proof may take months of expert effort to check.
What matters is not the raw cost $c$ but whether the evidence is \emph{decisive and compressible}.

We describe this using an \emph{effective verification cost}
\begin{equation}
\label{eq:Ceff}
\Ce = \frac{c}{\kappa},
\end{equation}
where $\kappa \in (0,1]$ denotes the fidelity or decisiveness of the evidence.
A careful proof has large $c$ but often has $\kappa$ close to one.
A noisy empirical study can have large $c$ and small $\kappa$, which produces a large $\Ce$.
A simulation study can have small $c$ but also small $\kappa$ when the simulator fails to capture the real-world mechanism of interest.
In that case, $\Ce$ can still be large.
This is why proxy sovereignty often emerges in simulator-grounded mature engineering domains.

\subsection{Signal Shrinkage and Saturation}

A second driver of proxy sovereignty is the shrinkage of true effect sizes.
As domains mature, improvements become smaller and harder to measure \citep{bloom2020ideas}.
When the between-paper variance in true value shrinks, reviewers face a low signal-to-noise problem even if they act in good faith.
Proxy signals then become tempting because they are easier to compare at scale.

Appendix~\ref{app:cases} provides additional case studies 
% (for example, communications, robotics, and medical ML) 
and maps them to distinct regions of Universe A and Universe B.

% Table~\ref{tab:universes} summarizes the two regimes and highlights how verification friction and signal shrinkage shape the observable structure of scientific production.

\section{Model}
\label{sec:model}

We propose a stylized model that keeps two features explicit.
First, verification is scarce.
Second, as fields saturate, true improvements become harder to distinguish from noise.

\subsection{Latent Truth, Proxy Judgment, and Score}

Each submission has an underlying (possibly multi-dimensional) scientific value; we model the venue-relevant scalar projection as $T$.
% We interpret $\Truth$ broadly.
In mathematics it corresponds to correctness and conceptual novelty.
In empirical work it corresponds to validity, robustness, and generality.
We treat $\Truth$ as a random variable with $\Var(\Truth)>0$ across the population of submissions to a venue in a fixed period.

A venue assigns each submission a scalar decision signal $S$  that influences acceptance and future allocation of attention.
% We model score formation as a mixture of two modes.

\begin{assumption}[Mixture evaluation model]
\label{assump:mixture}
With probability $q \in [0,1]$, reviewers obtain verification evidence and assign a score based on $\Truth$.
With probability $1-q$, reviewers rely on a proxy judgment that equals $\Truth+\Delta$, where $\Delta$ is a zero-mean proxy error.\footnote{The zero-mean assumption is across submissions: decisions are made once per paper, so proxy noise does not average out.} 
We assume $\Delta$ is independent of $\Truth$.
\end{assumption}

Under Assumption~\ref{assump:mixture}, we can write
\begin{align}
\label{eq:score}
\Score & = q \Truth + (1-q) (\Truth + \Delta) \nonumber \\
       & = \Truth + (1-q)\Delta.
\end{align}
This expression absorbs many details into $\Delta$.
It captures random proxy noise, model instability, benchmark overfitting, and strategic ``gaming'' of evaluation criteria.
We treat $q$ as an \emph{endogenous} quantity determined by verification capacity, which we formalize next.

\begin{definition}[Truth-coupling]
\label{def:tc}
We define truth-coupling as $\tc \coloneqq \Corr(\Score,\Truth)$.
\end{definition}

Truth-coupling is high when a venue's selection process preserves a strong relationship between recognition and latent value.
Truth-coupling is low when proxy errors dominate.

\subsection{Verification Pressure}

We describe the verification bottleneck using three quantities.
The claim rate $R$ denotes the number of claims that a venue processes per unit time.
This quantity grows with submission volume and with the ease of producing plausible claims.
The effective verification cost $\Ce$ denotes the cost of producing decisive evidence per claim, as in \eqref{eq:Ceff}.
The effective verification bandwidth $\Be$ denotes how much decisive evidence the community can process per unit time.
This term bundles reviewer time, reviewer expertise, and the availability of external checks such as replications, proof checking, or artifact evaluation.

We define verification pressure as
\begin{equation}
\label{eq:Lambda}
\vpressure \coloneqq \frac{R\,\Ce}{\Be}.
\end{equation}
When $\vpressure \le 1$, the community can in principle verify a large fraction of claims.
When $\vpressure$ grows above one, verification becomes selective, and $q$ decreases.

\begin{assumption}[Verification frequency]
\label{assump:q}
The probability of verification is
\begin{equation}
\label{eq:q}
q = \min\!\left\{1,\,\frac{1}{\vpressure}\right\}
= \min\!\left\{1,\,\frac{\Be}{R\,\Ce}\right\}.
\end{equation}
\end{assumption}

This assumption treats verification as a capacity-constrained queue with random service.
It is not a detailed model of program committee behavior.
It is a transparent abstraction that makes the core dependence explicit.

\subsection{Signal Shrinkage and Noise-to-Signal Ratio}

We also need a way to represent saturation.
We use the variance of $\Truth$ across submissions as a proxy for how much true signal is available to reviewers.
In many domains, true effect sizes shrink over time \citep{bloom2020ideas}.
This shrinkage reduces $\Var(\Truth)$.\footnote{Nascent fields like XAI mirror ``pre-ImageNet'' CV: undefined $T$ and high $\Ce$ yield low $\rho$. ImageNet temporarily anchored Universe A by serving as verification infrastructure (lowering $\Ce$), until benchmark saturation and gaming ($\Var(\Delta) \uparrow$) triggered a regression to Universe B. 
% This illustrates that verification infrastructure must be dynamic. We argue AI tools should serve precisely this function—lowering $\Ce$ via automated verification—rather than mimicking text.
}

We summarize the balance between proxy error and truth signal with a dimensionless ratio
\begin{equation}
\label{eq:snr}
\snr \coloneqq \frac{\Var(\Delta)}{\Var(\Truth)}.
\end{equation}
Large $\snr$ means proxy noise dominates.
Small $\snr$ means true value dominates.
Both verification pressure $\vpressure$ and saturation can increase $\snr$.
The former can increase $\Var(\Delta)$ through gaming and instability (Appendix~\ref{app:goodhart-amplification}), while the latter decreases $\Var(\Truth)$. Appendix~\ref{app:saturation} discusses practical proxies for signal shrinkage and sketches how we can estimate $\Var(\Truth)$ from benchmark time series and robustness reports.

% In the next section we analyze how $(\vpressure,\snr)$ governs truth-coupling and researcher incentives.

\section{Theory}
\label{sec:theory}

We now analyze how verification pressure $\vpressure$ and noise-to-signal ratio $\snr$ shape truth-coupling and researcher incentives.
All proofs appear in Appendix~\ref{app:proofs}.

\subsection{Truth-Coupling as a Closed Form}

Our first result links truth-coupling to $q$ and $\snr$.

\begin{theorem}[Truth-coupling under mixture evaluation]
\label{thm:tc}
Under Assumption~\ref{assump:mixture}, truth-coupling satisfies
\begin{equation}
\label{eq:tc}
\tc \;=\; \Corr(\Score,\Truth)
\;=\; \left(1 + (1-q)^2 \snr\right)^{-1/2}.
\end{equation}
\end{theorem}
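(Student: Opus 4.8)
The plan is to compute the correlation directly from the closed form in \eqref{eq:score}, treating $q$ as a fixed constant (it is endogenous through $\vpressure$, but pinned down once the venue's verification capacity is fixed). Starting from $\Score = \Truth + (1-q)\Delta$, I would use only the two structural facts from Assumption~\ref{assump:mixture}: $\E[\Delta]=0$ and $\Delta$ independent of $\Truth$, so in particular $\Cov(\Delta,\Truth)=0$. The whole argument is then a one-line variance decomposition.

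First I would compute the numerator of the correlation: by bilinearity of covariance, $\Cov(\Score,\Truth) = \Cov(\Truth,\Truth) + (1-q)\Cov(\Delta,\Truth) = \Var(\Truth)$, since the cross term vanishes. Next I would compute $\Var(\Score)$: because $\Truth$ and $(1-q)\Delta$ are uncorrelated, $\Var(\Score) = \Var(\Truth) + (1-q)^2\Var(\Delta)$. Combining these, $\tc = \Cov(\Score,\Truth)/\sqrt{\Var(\Score)\Var(\Truth)} = \sqrt{\Var(\Truth)}/\sqrt{\Var(\Truth) + (1-q)^2\Var(\Delta)}$. Dividing numerator and denominator by $\sqrt{\Var(\Truth)}$, which is legitimate since $\Var(\Truth)>0$ by assumption, and substituting $\snr = \Var(\Delta)/\Var(\Truth)$ from \eqref{eq:snr}, yields $\tc = \bigl(1 + (1-q)^2\snr\bigr)^{-1/2}$, as claimed. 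One can also note the sanity checks $q=1 \Rightarrow \tc=1$ and $q=0 \Rightarrow \tc=(1+\snr)^{-1/2}$.

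I do not expect a genuine obstacle: the computation is routine once \eqref{eq:score} is in hand. The only point deserving care is the modeling choice already encoded in \eqref{eq:score} — the per-paper score is the deterministic convex combination $\Truth + (1-q)\Delta$, not a Bernoulli mixture $\Truth + (1-B)\Delta$ with $B\sim\mathrm{Bernoulli}(q)$, which would instead give a factor $(1-q)$ rather than $(1-q)^2$ because $(1-B)^2 = 1-B$. I would therefore make the proof invoke \eqref{eq:score} explicitly so the exponent is transparent, and flag that independence is used in two places (to kill the covariance cross-term and to make $\Var$ additive), though strictly only $\Cov(\Delta,\Truth)=0$ and $\E[\Delta]=0$ are needed.
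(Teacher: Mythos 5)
Your proposal is correct and follows essentially the same route as the paper's own proof: compute $\Cov(\Score,\Truth)=\Var(\Truth)$ and $\Var(\Score)=\Var(\Truth)+(1-q)^2\Var(\Delta)$ from \eqref{eq:score} using $\E[\Delta]=0$ and independence, then normalize and substitute $\snr$. The extra remark distinguishing the deterministic combination from a Bernoulli mixture is a useful modeling observation but does not change the argument.
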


Theorem~\ref{thm:tc} makes two points explicit.
First, verification scarcity hurts coupling through $(1-q)^2$.
Second, saturation hurts coupling through $\snr$.
Even if reviewers act in good faith, small true effect sizes make it hard to preserve a strong relationship between score and truth.

A direct corollary translates a coupling target into a required verification frequency.

\begin{corollary}[Coupling budget]
\label{cor:budget}
Fix a target $\tc_{\min} \in (0,1)$.
If $\snr>0$, then $\tc \ge \tc_{\min}$ holds whenever
\begin{equation}
\label{eq:qmin}
q \;\ge\; 1 - \sqrt{\frac{\tc_{\min}^{-2}-1}{\snr}}.
\end{equation}
\end{corollary}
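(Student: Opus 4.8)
The plan is to read the bound off directly from the closed form in Theorem~\ref{thm:tc} via an elementary chain of equivalent inequalities; there is no genuine obstacle here, so the work is in stating the hypotheses carefully. Starting from $\tc = \bigl(1 + (1-q)^2\snr\bigr)^{-1/2}$, I would first note that $x \mapsto (1+x)^{-1/2}$ is strictly decreasing on $[0,\infty)$, so for $q\in[0,1]$ and $\snr>0$ the target $\tc \ge \tc_{\min}$ is equivalent to $1 + (1-q)^2\snr \le \tc_{\min}^{-2}$, i.e. to $(1-q)^2\snr \le \tc_{\min}^{-2}-1$. The right-hand side is strictly positive precisely because $\tc_{\min}\in(0,1)$, which is exactly where that hypothesis is used; if $\tc_{\min}=1$ the inequality would be $(1-q)^2\snr\le 0$, forcing $q=1$.

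Next, since $\snr>0$ I can divide to obtain $(1-q)^2 \le (\tc_{\min}^{-2}-1)/\snr$. Because $q\in[0,1]$ we have $1-q\ge 0$, so taking the nonnegative square root of both sides — no sign case is needed — gives $1-q \le \sqrt{(\tc_{\min}^{-2}-1)/\snr}$, equivalently $q \ge 1 - \sqrt{(\tc_{\min}^{-2}-1)/\snr}$, which is \eqref{eq:qmin}. Every step reverses, so \eqref{eq:qmin} is in fact necessary and sufficient among admissible $q$; the corollary records only sufficiency.

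Finally I would add the degenerate regime as a sanity check rather than a difficulty: when $(\tc_{\min}^{-2}-1)/\snr \ge 1$ (a lenient coupling target or little proxy noise) the threshold $1-\sqrt{\cdot}$ is $\le 0$, so every $q\in[0,1]$ satisfies the target and the bound is vacuous, matching the intuition that a modest target needs no verification. Composing \eqref{eq:qmin} with Assumption~\ref{assump:q}, where $q=\min\{1,\,1/\vpressure\}$, then converts the coupling budget into the verification-pressure ceiling $\vpressure \le \bigl(1 - \sqrt{(\tc_{\min}^{-2}-1)/\snr}\bigr)^{-1}$ whenever that denominator is positive, which is the form most useful for the policy discussion. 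The only things to watch are not dropping the $\snr>0$ hypothesis (needed to divide) and using $1-q\ge 0$ so the square root step is unconditional.
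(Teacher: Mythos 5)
Your proposal is correct and follows essentially the same route as the paper's proof: inverting the closed form from Theorem~\ref{thm:tc} through a chain of equivalent inequalities, using $\tc_{\min}\in(0,1)$ for nonnegativity, $\snr>0$ to divide, and $1-q\ge 0$ for the square-root step. The extra remarks on necessity, the vacuous regime, and the translation into a bound on $\vpressure$ are sound but go beyond what the corollary's proof requires.
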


Corollary~\ref{cor:budget} shows that coupling constraints become more demanding as $\snr$ grows.
This implies that a field saturates and true gains become smaller ($\Var(\Truth)$ shrinks), maintaining the same level of truth-coupling requires \emph{higher} verification frequency.
By Assumption~\ref{assump:q}, this translates into a bandwidth requirement $\Be \ge q_{\min} R \Ce$.

\subsection{A Phase Diagram: From Universe A to Universe B}
\label{sec:phase_diagram}

\begin{figure}[t]
\centering
\begin{tikzpicture}

% -------------------- Left panel: Lambda in [0,1] --------------------
\begin{axis}[
    name=axA,
    width=0.37\linewidth,
    height=0.84\linewidth,
    set layers,
    ylabel={Noise-to-signal ratio $\snr$},
    xmin=0, xmax=1,
    ymode=log,
    ymin=0.1, ymax=60,
    xtick={0,0.5,1},
    ytick={0.1,0.2,0.5,1,2,5,10,20,50},
    tick label style={font=\footnotesize},
    label style={font=\small},
    grid=both,
    minor grid style={opacity=0.2},
    major grid style={opacity=0.2},
    % 【修改1】强制显示为普通数字，不使用科学计数法
    log ticks with fixed point,
]

% Universe A background
\addplot[
    draw=none,
    fill=blue,
    fill opacity=0.06,
    on layer=axis background
] coordinates {(0,0.1) (1,0.1) (1,60) (0,60)} \closedcycle;

% q=1 label
\node[
    font=\footnotesize,
    fill=white,
    fill opacity=0.85,
    text opacity=1,
    inner sep=2pt,
    align=center
] at (axis cs:0.55,1.5) {$\rho=1$};

\end{axis}

% -------------------- Right panel: Lambda > 1 --------------------
\begin{axis}[
    name=axB,
    at={(axA.east)},
    anchor=west,
    xshift=3mm,
    width=0.72\linewidth,
    height=0.84\linewidth,
    set layers,
    % 【修改2】删除了这里的 xlabel
    % xlabel={Verification pressure $\vpressure$}, 
    ylabel={},
    yticklabels=\empty,
    xmode=log,
    xmin=1, xmax=100,
    ymode=log,
    ymin=0.1, ymax=60,
    xtick={1,2,5,10,20,50,100},
    ytick={0.1,0.2,0.5,1,2,5,10,20,50},
    tick label style={font=\footnotesize},
    label style={font=\small},
    legend style={
        at={(0.02,0.98)},
        anchor=north west,
        font=\scriptsize,
        fill=white,
        fill opacity=0.85,
        draw opacity=1,
        cells={anchor=west}
    },
    grid=both,
    minor grid style={opacity=0.2},
    major grid style={opacity=0.2},
    % 【修改1】同样加入这个选项，处理X轴的显示
    log ticks with fixed point,
]

% Collapse region shade
\def\vpc{22}
\addplot[
    draw=none,
    fill=red!5!white,
    % fill opacity=0.07,
    on layer=axis background
] coordinates {(\vpc,0.1) (100,0.1) (100,60) (\vpc,60)} \closedcycle;

% --- ISO-RHO CONTOURS ---

% Rho = 0.5
\addplot+[mark=none, domain=1.01:100, samples=300, thick]
    {(1/(0.5^2)-1)/((1-1/x)^2)}
    node[pos=0.81, anchor=south, yshift=1mm, font=\scriptsize, inner sep=2pt, text=.] {$\rho=0.5$};

% Rho = 0.7
\addplot+[mark=none, domain=1.01:100, samples=300, thick]
    {(1/(0.7^2)-1)/((1-1/x)^2)}
    node[pos=0.81, anchor=south, yshift=1mm, font=\scriptsize, inner sep=2pt, text=.] {$\rho=0.7$};

% Rho = 0.9
\addplot+[mark=none, domain=1.01:100, samples=300, thick]
    {(1/(0.9^2)-1)/((1-1/x)^2)}
    node[pos=0.81, anchor=south, yshift=1mm, font=\scriptsize, inner sep=2pt, text=.] {$\rho=0.9$};

% --- Arrow Targets ---
\coordinate (target09) at (axis cs:3.0, 0.5);
\coordinate (target05) at (axis cs:4, 8.1);

% Collapse boundary
\addplot+[mark=none, black, dashed, thick] coordinates {(\vpc,0.1) (\vpc,60)};

% Collapse label
\node[
    font=\footnotesize,
    color=red!40!black,
    inner sep=2pt,
    align=center
] at (axis cs:45,0.15) {$e^\star=0$};

% --- Universe B Label ---
\node[
    name=labelUnivB,
    font=\scriptsize,
    color=red!40!black,
    anchor=north east,
    align=center,
    fill=white, fill opacity=0.85, text opacity=1,
    inner sep=2pt
] at (axis cs:95, 55) {\textbf{Universe~B}\\(Proxy Sovereign)};

\end{axis}

% -------------------- CROSS-BOUNDARY LABEL --------------------
\node[
    name=labelUnivA,
    font=\scriptsize,
    color=green!40!black,
    anchor=center,          
    align=center,
    fill=white,             
    fill opacity=1,         
    inner sep=2pt,
    xshift=1.5mm,
    yshift=-1.9cm,
] at ([xshift=1.5mm, yshift=0cm]axA.east) {\textbf{Universe~A}\\(Truth Sovereign)};

% -------------------- ARROWS --------------------
% 1. From 0.9 contour TO Universe A label
\draw[->, thick, gray, shorten >=2pt, shorten <=2pt] 
    (target09) -- ([xshift=-4mm, yshift=-1mm]labelUnivA.north east);

% 2. From 0.5 contour TO Universe B label
\draw[->, thick, gray, shorten >=2pt, shorten <=2pt] 
    (target05) -- (labelUnivB.south west);

% -------------------- SHARED X-LABEL --------------------
% 【修改2】计算两个图的整体中心，并放置标签
% 使用 calc 库的语法： ($ (点1)!0.5!(点2) $) 表示两点的中点
% 这里取 左图左下角 和 右图右下角 的中间
\node[font=\small, xshift=0.5cm, yshift=-0.2cm] at (current bounding box.south) {Verification pressure $\vpressure$};

\end{tikzpicture}
\caption{\textbf{Truth-coupling phase diagram.}
Contours of $\rho=\Corr(S,T)$ over verification pressure $\Lambda$ and noise-to-signal ratio $\snr$ (Theorem~\ref{thm:tc}).
For $\Lambda\le 1$ we have full verification ($q=1$) and thus $\rho=1$.
For $\Lambda>1$, truth-coupling degrades as verification becomes scarce (larger $\Lambda$) or proxy noise increases (larger $\snr$).}
\label{fig:phase}
\end{figure}

Section~\ref{sec:universes} introduced Universe~A and Universe~B as two idealized feedback regimes.
\figurename~\ref{fig:phase} makes this link concrete.
We use truth-coupling $\tc \triangleq \Corr(\Score,\Truth)$ as an order parameter.
When $\tc$ is close to $1$, evaluation is truth-sovereign and the feedback loop closes through verification.
When $\tc$ is small, evaluation is proxy-sovereign and the loop closes through cheap signals that can become targets.

The left panel shows a simple but important limit.
When $\vpressure \le 1$, verification keeps up with claims, so $q=1$.
Then the mixture model gives $\tc=1$ for any $\snr$.
In other words, if we can verify essentially everything, proxy noise does not control outcomes.

The right panel shows the regime $\vpressure>1$, where verification becomes scarce.
Here $q$ decreases as $\vpressure$ increases, so even honest reviewers must lean more on proxies.
Moving right, toward larger $\vpressure$, lowers $\tc$ because more decisions occur in the proxy mode.
Moving up, toward larger $\snr$, also lowers $\tc$ because proxy variation dominates true variation.

The dashed line is \emph{not} a boundary in $\tc$.
It marks an incentive threshold from our effort model.
To the right of this line, the researcher's optimal truth effort can hit the corner solution $e^\star=0$, even if the field currently has moderate noise.  We provide the formal derivation of this incentive collapse in Theorem~\ref{thm:effort}.

% Once truth effort collapses, the field can drift toward lower coupling because proxy optimization becomes privately attractive and can increase proxy variance over time.
% This is why we view integrity as a capacity problem.
% If claim volume grows without a matching increase in effective verification bandwidth, $\vpressure$ rises and the system can become fragile even when it starts in a high truth-coupling region.

We remark that sometimes $\Var(\Delta)$ (and hence $\snr$) may itself increase with verification pressure $\vpressure$.
Appendix~\ref{app:goodhart-amplification} gives a stylized mechanism showing how selective reporting and best-of-$K$ search can make this happen.

\subsection{A Collapse Condition for Truth Effort}
\label{sec:effort-collapse}

\begin{figure}[t]
    \centering
\begin{tikzpicture}
    \begin{axis}[
        % --- 坐标轴设置 ---
        axis lines = left,
        xlabel = {Gaming efficiency $\gamma(1-q)$},
        ylabel = {Rational true effort $e^*$},
        ymin = -0.1, ymax = 1.3,
        xmin = -0.05, xmax = 5,
        xtick = \empty, % 隐藏具体数字，强调概念
        ytick = {0},
        yticklabels = {0},
    tick label style={font=\footnotesize},
        label style={font=\small},
        xlabel style={at={(axis description cs:0.85,-0.04)},anchor=north east},
        clip=false,
        width=8.4cm,
        height=7.7cm,
        % --- 样式定义 ---
        every axis plot/.append style={ultra thick},
    ]

    % --- 定义临界点 ---
    \def\threshold{2.8}

    % --- 背景区域划分 (Universe A vs Universe B) ---
    % Universe A (左侧背景)
    \fill[green!5!white] (axis cs:0,0) rectangle (axis cs:\threshold,1.3);
    \node[green!40!black, anchor=north] at (axis cs:1.4, 1.25) {\bfseries Looks\ldots OK?};
    \node[green!40!black, scale=0.8] at (axis cs:1.4, 1.10) {Incentive-Aligned};
    \node[green!40!black, scale=0.7, align=center] at (axis cs:1.4, 0.7) {Effort Pays Off\\$e^* > 0$};

    % Universe B (右侧背景)
    \fill[red!5!white] (axis cs:\threshold,0) rectangle (axis cs:5,1.3);
    \node[red!40!black, anchor=north] at (axis cs:3.9, 1.25) {\bfseries Oh$\ldots$No!};
    \node[red!40!black, scale=0.8] at (axis cs:3.9, 1.10) {Incentive  Collapse};
    \node[red!40!black, scale=0.7, align=center] at (axis cs:3.9, 0.7) {Rational Optimum\\$e^* = 0$};

    % --- 绘制曲线 ---
    
    % 1. 蓝色曲线：缓慢衰减阶段 (Concave decay)
    % 模拟函数: e = sqrt(3 - x) * 0.6，直到临界点
    \addplot[
        color=blue!70!black,
        domain=0:\threshold,
        samples=100
    ] {0.2 + 0.3 * sqrt(\threshold - x + 0.1)}; 
    % 注意：这里构造了一个简单的下降曲线

    % 2. 红色直线：死寂阶段 (e=0)
    \addplot[
        color=red!80!black,
        domain=\threshold:5,
    ] {0};

    % --- 绘制断崖 (The Jump) ---
    \draw[dashed, gray, thick] (axis cs:\threshold, 0) -- (axis cs:\threshold, 0.72); % 虚线连接
    
    % 空心圆 (表示极限位置不包含该点)
    \addplot[mark=*, mark options={fill=white, draw=blue!70!black}, only marks] coordinates {(\threshold, 0.72)};
    
    % 实心圆 (表示跳变后的落点)
    \addplot[mark=*, mark options={fill=red!80!black, draw=red!80!black}, only marks] coordinates {(\threshold, 0)};

    % --- 关键标注 ---
    
    % % 临界点标注
    % \node[anchor=north, align=center, font=\footnotesize] at (axis cs:\threshold, 0.05) 
    %     {\textbf{Critical Threshold}\\$(1-q)\gamma = f'(0)$};

    % 解释箭头
    \draw[->, thick, black] (axis cs:2.5, 0.88) -- (axis cs:2.7, 0.74);
    \node[anchor=south west, font=\scriptsize, align=left] at (axis cs:0.2, 0.85) 
        {As verification $q$ weakens,\\effort degrades\ldots, until};

    \draw[->, thick, red!80!black] (axis cs:3.0, 0.2) -- (axis cs:\threshold+0.1, 0.05);
    \node[anchor=west, red!80!black, font=\scriptsize, align=left] at (axis cs:3.0, 0.3) 
        {\textbf{THE COLLAPSE}\\ $(1-q)\gamma = f'(0)$ \\Sudden drop to zero\\when cheating turns\\more profitable than\\starting research.};

    \end{axis}
\end{tikzpicture}
\caption{\textbf{Incentive collapse for truth effort.}
The researcher's optimal truth effort $e^\star$ as a function of the effective proxy return $(1-q)\gamma$ (Theorem~\ref{thm:effort}).
As $(1-q)\gamma$ increases, $e^\star$ decreases and eventually hits the corner solution $e^\star=0$ when $(1-q)\gamma \ge f'(0)$.}
    \label{fig:collapse}
\end{figure}

Truth-coupling describes how well a venue's scores track latent scientific value at the community level.
We also need to understand incentives at the level of individual authors.
Even if reviewers act in good faith, authors respond to what the system rewards.

We use a minimal effort allocation model.
Each researcher chooses how to spend a unit of effort.
Truth-oriented effort supports claims that would survive careful checking.
Proxy-oriented effort improves how the work looks under cheap signals, such as marginal benchmark tuning or presentation polish.
When verification is common, proxy optimization matters less because fewer decisions rely on proxies.
When verification is scarce, proxy optimization matters more because more decisions rely on proxies.

\begin{assumption}[Effort and payoff]
\label{assump:effort}
A researcher chooses $e\in[0,1]$ to maximize
\begin{equation}
\label{eq:utility}
S(e) \;=\; f(e) \;+\; (1-q)\gamma(1-e),
\end{equation}
where $S(e)$ follows \eqref{eq:score} and $f$ captures truth-oriented production (differentiable, increasing, and concave),
and $\gamma>0$ scales the effectiveness of proxy optimization in the proxy mode.
We allow $\gamma$ to increase with tools that facilitate proxy optimization, such as automated writing.
\end{assumption}

\begin{theorem}[Truth effort decreases with verification scarcity]
\label{thm:effort}
Under Assumption~\ref{assump:effort}, any maximizer $e^\star$ satisfies the first-order condition
\begin{equation}
\label{eq:foc}
f'(e^\star) = (1-q)\gamma
\end{equation}
whenever $e^\star \in (0,1)$.
Moreover, if $(1-q)\gamma \ge f'(0)$, then an optimal solution satisfies $e^\star = 0$.
\end{theorem}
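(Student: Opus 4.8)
The plan is to read both claims off a single observation: \eqref{eq:utility} is a scalar concave maximization on the compact box $[0,1]$, so the optimizer is pinned down by the sign of one derivative. First I would note that $S$ is continuous on $[0,1]$ — $f$ is differentiable hence continuous, and $(1-q)\gamma(1-e)$ is affine — so by the extreme value theorem a maximizer $e^\star$ exists. Differentiating, $S'(e) = f'(e) - (1-q)\gamma$; since $f$ is concave its derivative $f'$ is non-increasing, hence $S'$ is non-increasing and $S$ is concave on $[0,1]$. This concavity is what lets local first-order information be promoted to global optimality.

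For the interior claim, I would invoke the standard necessary condition for a differentiable function on an interval: if a maximizer $e^\star$ lies in the open interior $(0,1)$, then $S'(e^\star)=0$, which rearranges to $f'(e^\star)=(1-q)\gamma$. Concavity then also gives sufficiency — any $e$ solving this first-order equation is a global maximizer — which is worth stating even though the theorem only asserts necessity.

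For the corner claim, I would argue from the left endpoint. If $(1-q)\gamma \ge f'(0)$, then $S'(0) = f'(0) - (1-q)\gamma \le 0$, and because $S'$ is non-increasing we get $S'(e) \le S'(0) \le 0$ for every $e \in [0,1]$. Hence $S$ is non-increasing on $[0,1]$, so $S(0) \ge S(e)$ for all $e$, i.e. $e^\star = 0$ is optimal. (When the inequality is strict, $0$ is the unique maximizer; with equality it may tie with a subinterval on which $f$ is locally affine, but it is still an optimal solution, matching the statement's phrasing ``an optimal solution satisfies $e^\star=0$.'')

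I do not expect a real obstacle here: this is the textbook KKT / complementary-slackness picture for a one-dimensional box constraint. The only point requiring mild care is not to over-claim — the theorem deliberately leaves the right endpoint $e=1$ uncharacterized (it would be the maximizer when $(1-q)\gamma \le f'(1)$, a regime where proxy optimization never pays), so I would confine the corner statement to $e^\star=0$ exactly as written and use concavity only to turn the sign of $S'$ into a global conclusion.
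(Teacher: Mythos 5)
Your proposal is correct and follows essentially the same route as the paper's proof: concavity of $S$ as the sum of the concave $f$ and a linear term, the first-order condition $f'(e^\star)=(1-q)\gamma$ at an interior maximizer, and the corner case from $S'(0)\le 0$ together with the nonincreasing derivative forcing $S$ to be nonincreasing on $[0,1]$. The extra remarks on existence, sufficiency, and the untreated right endpoint are fine but not needed.
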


Theorem~\ref{thm:effort} delivers a brutal interpretation of the researcher's dilemma. The author faces a trade-off between the marginal return on truth, $f'(e^\star)$, and the temptation of proxy optimization, $(1-q)\gamma$. As verification bandwidth saturates ($q \to 0$) or AI tools cheapen mimicry ($\gamma \uparrow$), the incentive structure tilts aggressively against truth.

The theorem's most critical implication is the corner solution $e^\star=0$. If the return on cheap proxies exceeds the initial value of doing real science—specifically, if $(1-q)\gamma \ge f'(0)$, rational effort collapses completely, illustrated in \figurename~\ref{fig:collapse}. Crucially, this collapse condition is independent of the field's current noise level or historical reputation. Mathematically, this implies that 'top-tier' status serves merely as a historical record of low verification pressure with low noise-to-signal ratio, rather than a structural firewall against AI-generated claims.
% A venue can possess high statistical truth-coupling today (low noise) yet be structurally primed for immediate collapse if verification capacity falls behind claim volume. 
% This explains why the "incentive cliff" in \figurename~\ref{fig:phase} is a vertical line that cuts across all levels of trust.

\subsection{Theoretical Summary: The Capacity Trap}
\label{sec:theory_summary}

Our theoretical analysis leads to a sobering conclusion: under the exponential pressure of AI-driven claim inflation, \textbf{incentive collapse is not merely a risk, but the default structural destiny} of the peer review system.
We have demonstrated that truth-coupling $\rho$ acts as a deceptive lagging indicator, capable of masking deep fragility in historically trusted fields (i.e. high $\rho$ fields can also have $e^\star\approx 0$ collapse).
As long as verification bandwidth $\Be$ remains static while the volume of claims $R$ and the ease of mimicry $\gamma$ scale exponentially, the collapse condition derived in Theorem~\ref{thm:effort} becomes a mathematical inevitability.
The system is structurally condemned to drift toward Universe B.

However, the math also reveals the escape: expand the verification bandwidth ($\Be$) to reverse the collapse inequality.
This necessity, i.e. to shift from \emph{score prediction} to \emph{verification}, sets the stage for the verification-first paradigm.

% Theorem~\ref{thm:effort} gives a simple interpretation.
% When the optimum lies in the interior, the researcher balances two marginal returns.
% The left side $f'(e^\star)$ is the marginal gain from investing more effort in truth-oriented work.
% The right side $(1-q)\gamma$ is the marginal gain from shifting effort toward proxy optimization, scaled by how often the system uses proxies.
% As verification becomes scarce, $q$ decreases and the proxy term grows.
% As tools make proxy optimization easier, $\gamma$ increases and the proxy term also grows.
% In either case, the balance pushes $e^\star$ downward.

% The second part of the theorem gives a sharp warning.
% If the proxy return at the starting point exceeds the initial marginal value of truth work, namely if $(1-q)\gamma \ge f'(0)$, then the best response hits the corner solution $e^\star=0$.
% We interpret this as an \emph{incentive collapse} for truth effort, illustrated in \figurename~\ref{fig:collapse}.
% This threshold does not depend on the current noise-to-signal ratio $\snr$.
% A field can therefore sit in a low-noise region today and still face incentive collapse if verification capacity falls behind claim volume.
% This is the reason the dashed line in \figurename~\ref{fig:phase} is vertical.

% When we push the system back to $(1-q)\gamma < f'(0)$, rational authors regain a positive incentive to invest in truth-oriented effort for new work.
% This change does not erase legacy noise, but it can stop further incentive-driven degradation and create room for truth-coupling to recover.

\section{Implications for AI-Assisted Peer Review}
\label{sec:ai}

\begin{quote}
\itshape
``You can not stop the cold wind from blowing, but you can build a house to shelter you from it.''\\
--- Isaac Asimov, Foundation~\cite{asimov1951foundation}
\end{quote}

In our context, the ``cold wind'' is the exponential growth of AI-generated noise, and the ``house'' is the verification capacity. Just as a gas obeys the ideal gas law regardless of the motion of individual molecules, the scientific community obeys these collapse dynamics regardless of individual ethical intent. To construct this ``house'', our results in Section~\ref{sec:theory} suggest a fundamental design principle.

\medskip
\noindent\textbf{Principle (verification-first).}
\emph{We should use AI to reduce verification friction and increase truth-coupling, not to imitate human review text or to predict human scores.}
\medskip

We now justify this principle and translate it into design guidelines using the $(q,r)$-framework.

\subsection{Why Score Imitation Is the Wrong Default Goal}
\label{sec:why-imitation-wrong}

Many ``AI reviewer'' projects optimize agreement with human reviews or acceptance decisions \citep{zhou-etal-2024-llm,thelwall2025predictive,bao2021acceptance}.
Imitation can be a reasonable target for writing assistance, where we want outputs to match human style.
It is not a good default target for evaluating a fixed scientific claim.

A human score comes from the same bandwidth-limited system we aim to improve.
When $\vpressure$ is large and $\snr$ is large, Theorem~\ref{thm:tc} implies that truth-coupling $\tc$ is low.
In that regime, scores are weak signals of $\Truth$ by construction.
If we train or prompt an AI reviewer to match those scores, we teach it to reproduce the same proxy-weighted rule, including its blind spots. 
Even training on a historical ``high-SNR'' period can at best reduce $r$; it does not increase verification frequency $q$, and at scale imitation tools tend to raise $R$ and $\gamma$, lowering $q$ and reintroducing proxy-sovereign dynamics.

Imitation-first deployment can also worsen incentives.
Tools can increase the effective claim rate $R$ by making proxy-aligned submissions cheaper to produce, and increase $\gamma$ in Theorem~\ref{thm:effort} by making proxy criteria easier to optimize.
Both changes raise verification pressure and move the system toward proxy sovereignty.

Our claim is not ``automation instead of reviewers'': human reviews still contain useful, work-specific signal, and verification-first AI should complement humans by producing auditable evidence for them to weigh \citep{dutta2025problem}.

\subsection{The Objective: Reducing Verification Pressure}
\label{sec:objective}

Our theoretical analysis implies that we cannot simply optimize for truth-coupling $\tc$ as a static metric, as it can be a deceptive lagging indicator without capture structurally the deep fragile.
Instead, we must target the \emph{drivers} of the potential collapse.
Theorem~\ref{thm:effort} establishes that stability depends on keeping verification pressure $\vpressure = R\Ce/\Be$ low enough to satisfy the incentive constraint.
This transforms the design goal from "better prediction" to "capacity engineering," pointing to three concrete operational levers, reducing $R$ (Gatekeeping), reducing $\Ce$ (Auditing), or increasing $\Be$ (Scaling).

Unlike mimetic methods that try to fit $S$ to $T$ directly, verification-first AI targets these levers.
We formally state this design goal as an \emph{evidence acquisition problem}.
Let $\mathcal{D}$ be the distribution over submissions with observable proxy $\Proxy$ and latent truth $\Truth$.
The AI policy is not a scoring function, but an \emph{auditing policy} $\pi$ that selects a verification action $\Verify \sim \pi(\cdot\mid \Proxy)$ to reveal evidence.
The final score $\Score$ is then derived from this evidence.

We seek policies that maximize the revelation of truth within the community's bandwidth limits.
\begin{tcolorbox}[
    colback=green!5!white,
    colframe=green!60!black,
    title=Verification-First AI Peer Review,
    colbacktitle=green!20!white,
    coltitle=black,
    sharp corners=south,
    fonttitle=\small\bfseries,
    boxrule=0.2mm,
    boxsep=1mm,
    left=2mm,
    right=2mm
]
\begin{equation}
\label{eq:objective_pretty}
\max_{\pi}\;\; \Corr(\Score_\pi, \Truth) \;-\;\lambda\,
\E_{\pi}\!\bigl[\text{Cost}(\Verify)\bigr],
\end{equation}
\end{tcolorbox}
Here, $\lambda$ penalizes verification cost, and $\Score_\pi$ is the score derived from the evidence $\Verify$ under policy $\pi\in\Pi_{\text{verify}}$.
Crucially, we restrict $\Pi_{\text{verify}}$ to auditable evidence-producing actions, preventing the optimizer from exploiting cheap proxies.
This forces the tool to find the most efficient \emph{decisive} test that maintains truth-coupling.

\subsection{Design Guidelines from the ``$(q,r)$-Framework''}

\paragraph{Reducing $R$ without suppressing novelty.}
A naive response to high $R$ is to raise barriers to entry.
That can protect incumbents and can suppress high-risk ideas.
A verification-first alternative is to \emph{separate claim production from claim validation}.
For example, a venue can implement a staged pipeline where most submissions first receive a lightweight auditability check.
This check asks whether the core claims map to specific artifacts, namely code, data, proofs, or ablation plans.
Only submissions that pass this gate consume full reviewer bandwidth.

We can also reduce $R$ by changing what counts as a ``claim.''
If a paper makes ten loosely connected claims, reviewers cannot verify any of them well.
A venue can ask authors to declare one primary claim and to attach a verification plan for that claim.
This reframes evaluation as claim-evidence pairing rather than as an unconstrained narrative.

\paragraph{Reducing $\Ce$ by producing higher-fidelity evidence.}
Equation~\eqref{eq:Ceff} suggests that $\Ce$ drops when $\kappa$ increases.
AI can help by turning vague claims into explicit, testable units and by generating minimal tests with high diagnostic value.
We highlight four families of verification artifacts that AI can scale.

\emph{Claim-evidence maps.}
Given a manuscript, an assistant can extract the main claims and link each claim to the exact evidence that supports it.
This helps reviewers allocate attention and reduces time spent searching.

\emph{Smoke tests.}
Given code and data, an assistant can run a small set of checks that catch common failure modes, such as non-determinism, data leakage, and baseline misconfiguration.
These checks do not prove correctness, but they increase $\kappa$ by eliminating weak forms of evidence.

\emph{Variance audits.}
If a result depends on random seeds, an assistant can automatically run repeated trials, report dispersion, and flag results that rely on a small number of favorable runs.
This directly targets the instability component of $\Delta$.

\emph{Stress tests.}
For empirical claims, an assistant can generate perturbations, distribution shifts, or adversarial subsets that test whether a reported gain is robust.
For theoretical claims with implementations, an assistant can search for counterexamples within a declared scope.

Each family reduces $\Ce$ because it increases the amount of decisive information per unit reviewer effort.
These artifacts also reduce $\Var(\Delta)$, because they remove degrees of freedom for proxy gaming.

\paragraph{Increasing $\Be$ by compressing expert attention.}
The bandwidth $\Be$ is not just ``more reviewers.''
It is also the ability to compress expert attention into reusable checks.
AI can increase $\Be$ by standardizing and automating verification artifacts, by routing papers to reviewers with matching expertise, and by generating concise summaries of evidence that reviewers can trust.
A key requirement is that these summaries remain grounded in verifiable artifacts.
Summarizing a paper-like narrative does not increase $\Be$ in our sense.
Summarizing verified evidence does.

\paragraph{Flattening proxy gradients to resist Goodhart drift.}
Even with better verification, proxy signals will remain.
We should therefore reduce the incentive to over-optimize them.
Theorem~\ref{thm:effort} suggests one mechanism: reduce the marginal score benefit of proxy effort in the non-verification mode.
Mechanism design can help.

A probabilistic audit mechanism serves as a critical deterrent.
By subjecting a random subset of accepted papers to rigorous, high-cost verification (e.g., code reproduction), we introduce a \emph{negative expected utility} for fragile or fabricated results, even if the audit probability is low.
Crucially, this creates an asymmetric risk profile: if authors know that acceptance carries a probability of a high-fidelity audit, then proxy gains that cannot survive scrutiny become liabilities.

% Relating back to Theorem~\ref{thm:effort}, these mechanisms structurally reduce the \emph{marginal return to proxy-directed effort}, theoretically reversing the collapse.
% This dampening effect shifts the equilibrium effort $e^*$ away from noise generation and back toward verifiable work, theoretically reversing the collapse.

\subsection{Case Study: The ``SOTA'' Optimizer Mirage}
\label{sec:case_study}

 % Consider the most ubiquitous trope in modern ML: a submission proposing a new optimizer that claims a marginal gain (+0.5\%) over Adam on ImageNet.

\begin{figure}[t]
\centering
\begin{tcolorbox}[
    width=0.98\linewidth,
    boxsep=3pt,
    left=4pt, right=4pt,
    top=3pt, bottom=3pt,
    colback=blue!3!white,      
    colframe=blue!40!black,    
    boxrule=0.6pt,
    arc=1.5mm,
    title={A paper reports a +0.5\% gain on ImageNet. Real?},
    colbacktitle=blue!10!white,
    coltitle=black,
    fonttitle=\small\bfseries,
    sharp corners=south,
    standard jigsaw,
    opacityback=0.9
]
    \small
    \setlength{\parskip}{0.3em}

    \textbf{1. The imitation path (accelerating Universe B):}
    The AI acts as a \textit{simulator}.
    It scans the text, recognizes the ``SOTA'' claim, and generates a plausible-sounding review praising the ``novelty'' while nitpicking formatting or missing citations.
    % \textit{Outcome:} It produces a high-confidence score based on the \textbf{packaging}, not the payload. It increases agreement with (overworked) reviewers, but adds zero informational value.

    \par\noindent\hrulefill\par % 分割线，增强对比感

    \textbf{2. The verification-first path (restoring Universe A):}
    The AI acts as an \textit{adversarial auditor}.
    Instead of producing an opinion from the narrative, it extracts the core claim and generates a \textbf{stress test} to verify it.
    It automatically runs a seed sweep and checks sensitivity to hyperparameters.
    % \textit{Outcome:} If the gain is noise, the AI outputs a "Collapse Report" showing the overlap. If the gain is real, it generates a "Certificate of Robustness." It produces \textbf{decisive evidence}, not opinion.
\end{tcolorbox}

\vspace{-0.2cm}
\caption{Imitation AI automates the \textit{appearance} of peer review, optimizing for style match. Verification-first AI automates the \textit{labor} of scientific scrutiny, optimizing for truth discovery.}
\label{fig:case}
\vspace{-0.5cm}
\end{figure}

Figure~\ref{fig:case} reveals a critical \emph{paradox of efficiency}.
Superficially, imitation-first AI appears to boost productivity by generating text instantly; in reality, it acts as an \emph{inflationary engine}.
By commoditizing the production of ``plausible" reviews, it inflates the noise floor ($R$) without expanding the actual verification bandwidth ($\Be$).
This creates a ``Review Theater" where agents mimic the rituals of evaluation without performing the work.
In contrast, verification-first AI introduces \emph{salutary friction}.
It does not merely ``check" the paper; it \emph{collapses the state space} of selective reporting ($\Var(\Delta)$).
By automating the ``dirty work" of code execution and sensitivity analysis, it structurally removes the degrees of freedom that authors use to mine for significance.
While this approach feels more ``expensive" per paper (requiring compute), it is the only mathematical path to flip the inequality in Theorem~\ref{thm:effort} and make truth-seeking ($e^* > 0$) the rational strategy again.

\subsection{Why AI Plausibly Can Help Verify Research Result}

% Recent evidence suggests that AI can plausibly serve as a \emph{verification assistant} when the task is formulated as bounded, auditable checks.
Automatic review generators can be largely insensitive to faulty reasoning under a controlled counterfactual evaluation, indicating that text-level ``review mimicry'' fails precisely on core reviewing skills \citep{dycke2025automatic}.
This motivates verification-first designs that target \emph{structured} reasoning and checkable sub-tasks \citep{dycke2025stricta}.
Concretely, SciCoQA operationalizes verification as paper--code alignment auditing, turning a key class of claims into measurable, reusable QA artifacts \citep{scicoqa2026}.
Together, these works support a pragmatic thesis: AI is most credible for verification when it produces inspectable evidence trails. Appendix~\ref{sec:empirical} outlines concrete measurement strategies.

% A verification-first system should not be evaluated primarily by text similarity to human reviews.
% We should evaluate it by whether it increases truth-coupling and reduces verification pressure.
% In practice, we can estimate these effects by measuring the fraction of submissions with verified artifacts, the time-to-decision, the robustness of accepted results under independent reruns, and the long-horizon stability of acceptance decisions.

% We emphasize that verification-first systems still require human judgment.
% Humans decide which claims matter and which evidence is decisive.
% AI should make that judgment cheaper to exercise, not replace it.

\section{Alternative Views}
\label{sec:altviews}

To engage credible disagreement, we describe several alternative views that we often hear in discussions of AI-assisted peer review.
We then explain why we still advocate a verification-first objective.

\subsection{View 1: Review Mimicry Is a Pragmatic Target}

A common view is that human review scores and texts are the best available labels at decision time.
Under this view, we should train LLMs to predict scores or to generate reviews that match past reviewer behavior because doing so can reduce workload and speed up decisions \citep{zhou-etal-2024-llm,thelwall2025predictive}.
This view is credible.
Scientific truth is not directly observable at review time, and reviewers already represent the community's current norms.

We agree that language models can help with \emph{communication} tasks, such as summarizing a submission, mapping claims to supporting evidence, or helping authors rewrite unclear parts.
However, score mimicry is a problematic \emph{optimization target}.
In our model, a score is an output of a bandwidth-constrained process with nonzero proxy noise, so truth-coupling can be low under high verification pressure $\vpressure$ even when reviewers act in good faith.
Optimizing an AI system to match scores therefore optimizes toward this low-coupling mixture; moreover, fully automatic review generation can miss core reviewing skills (e.g., detecting faulty reasoning), underscoring the need for human judgment \citep{dycke2025automatic}.
Mimicry can also increase the claim rate $R$ by making persuasive text cheaper and proxy heuristics easier to satisfy, which raises $\vpressure$ and can further reduce coupling.

We do not argue that LLM assistance is useless.
Rather, imitation should not be the primary objective for AI-assisted \emph{evaluation}.
Instead, AI should support \emph{verification-first} workflows by producing auditable evidence that humans can interpret and weigh.

\subsection{View 2: We Should Avoid AI in Review Entirely}

Another credible view is that we should not use LLMs in peer review at all.
Concerns include confidentiality, hidden biases, and the risk that automated tools create new forms of review theater \citep{akella2025prereview}.
This view is also credible because of high-stakes setting and costly failure.

We acknowledge these risks, but we argue that \emph{abstinence is a strategy for certain defeat}.
Our theoretical analysis reveals a harsh reality: the verification bottleneck is not static; it is widening exponentially due to the rising claim rate $R$.
As long as human bandwidth $\Be$ remains fixed, the collapse condition derived in Theorem 4.4 becomes a \emph{mathematical inevitability}, regardless of ethical intent.
Rejecting AI assistance does not preserve the sanctity of human review; it structurally condemns the system to be overwhelmed by noise, forcing a regression to Universe B.
Therefore, the deployment of AI is not optional but existential.
We must deploy AI to radically expand $\Be$---not to mimic opinions, but to execute the artifact-grounded audits (e.g., code execution, seed sweeping) that act as the only mathematical counterweight to the coming flood of cheap claims.

\subsection{View 3: The Main Problem Is Bias and Inconsistency, Not Verification}

A third view is that the main failure mode of peer review is social bias and inconsistency.
Under this view, AI should focus on calibrating scores, detecting reviewer bias, or standardizing rubrics.
We agree that bias and inconsistency are real.
We also agree that some proxy signals can encode status and network effects.

We nevertheless expect verification-first mechanisms to reduce bias: when truth-coupling is low, small social effects can dominate decisions, but decisive evidence leaves fewer degrees of freedom for subjective bias.
Thus our framework complements fairness-oriented work: bias reduction can be an additional objective, but not one pursued via score imitation.
Instead, we should improve evidence fidelity and reduce proxy noise, strengthening both epistemic reliability and procedural fairness.

% In summary, these alternative views highlight real risks and constraints.
% They do not remove the need to make verification cheaper and more decisive.
% This is why we keep the verification-first objective at the center of our position.

\section{Call to Action}
\label{sec:call}

Our theoretical analysis implies concrete steps.
We emphasize feasible changes within existing conference and journal structures.

\subsection{What Venues Can Do}
\label{sec:venues}

Venues should treat verification bandwidth as scarce and allocate it deliberately.
A practical step is a staged review pipeline.
Stage~1 screens \emph{auditability}: reviewers and automated checks verify that the main claims map to concrete artifacts, scripts, data, or proofs.
Stage~2 concentrates deep verification on a smaller set of submissions, including artifact reruns and robustness checks.
This reduces the effective claim rate $R$ that consumes full bandwidth.

Venues should also commit to random post-acceptance audits.
Even a small audit rate weakens incentives to optimize fragile proxies.
Audits can be lightweight but decisive, such as rerunning code, checking baselines, or validating a theorem.
Audit outcomes can be published as additive addenda, so the process stays corrective rather than punitive.

\subsection{What Tool Builders Can Do}
\label{sec:tool-builders}

Tool builders should prioritize evidence-grounded modules over review text generation.
Examples include claim--evidence maps, reproducibility smoke tests, variance audits, and stress tests.
Outputs should be inspectable and reusable, namely commands, logs, and minimal evidence bundles.
We should evaluate tools by whether they reduce $\Ce$, increase $\Be$, or reduce $\snr$ in downstream decisions.

Tool builders should also support shared, open audit suites.
Community-maintained diagnostic tests let venues reuse results and compress expert effort into standardized checks, increasing effective bandwidth.

\subsection{What Funders and Institutions Can Do}
\label{sec:funders}

Verification-first systems require infrastructure.
Funders can support shared testbeds, reproducibility tooling, and long-horizon evaluation datasets.
Institutions can recognize verification work as a first-class contribution, including replications, negative results, and engineering that makes evidence cheaper to check.

% These steps do not solve peer review on their own, but they shift incentives toward evidence and away from proxy optimization, and they make verification-first AI practical to deploy.

\section{Conclusion}
\label{sec:conclusions}

AI can either amplify proxy sovereignty or help restore truth sovereignty.
% Imitation-first systems scale the current feedback loop, including its bottlenecks and biases.
% Verification-first systems scale decisive evidence and increase truth-coupling.
Our results suggest that the core  challenge is 
not to generate better review prose.
It is 
to design evidence, incentives, and bandwidth allocation so that ``what gets rewarded'' stays coupled to ``what is true.''

\section*{Acknowledgement}
We thank Dr.\ Nihar Shah for insightful discussions and for early, sustained advocacy of verification-oriented uses of AI in peer review, and Nils Dycke and Subhabrata Dutta for valuable comments that helped clarify assumptions and sharpen the presentation.

% AI can either amplify proxy or help restore truth. The core research challenge is to make sure ``what gets rewarded'' stays coupled to ``what is true.''

\begin{appendix}
\onecolumn

\par\vspace{0.2cm}
\begin{center}
    {\Large \bf Appendix: Preventing the Collapse of Peer Review Requires Verification-First AI}
\end{center}
\par\vspace{0.2cm}

\par\vspace{1em}
{\small
\setcounter{tocdepth}{1}
\startcontents[sections]
\printcontents[sections]{l}{1}{}
}

\section{Appendix Overview}
\label{app:overview}

This appendix collects proofs and extensions that support the main text.

Appendix~\ref{sec:related} summarizes the related work to this paper.
Appendix~\ref{app:proofs} provides full proofs for Theorems~\ref{thm:tc} and \ref{thm:effort} and Corollary~\ref{cor:budget}.
Appendix~\ref{app:goodhart-amplification} provides a stylized derivation showing how gaming variance can grow rapidly with verification pressure when actors select the best outcome from many proxy-optimization attempts.
Appendix~\ref{app:saturation} discusses practical ways to measure signal shrinkage and the noise-to-signal ratio $\snr$ in real venues.
Appendix~\ref{app:citations-proxy} formalizes how low truth–coupling regimes can arise and shows large cross-field disparities in the expected scientific value per citation can follow.
Appendix~\ref{sec:empirical} outlines how we can measure verification pressure and signal shrinkage in practice.
Appendix~\ref{app:cases} gives short case studies that connect the model to several mature engineering domains.
Appendix~\ref{sec:call} proposes concrete steps that venues, tool builders, and funders can take. 
Appendix~\ref{sec:limitations} discussed the limitations of this paper.

\section{Related Work}
\label{sec:related}

\subsection{Metrics, Incentives, and Goodhart Drift}

A central concern in our paper is the gap between a latent objective and a measurable proxy.
Goodhart's law states that when a measure becomes a target, it can stop being a good measure \citep{goodhart1975problems}.
Campbell's law articulates a related effect in social systems under pressure \citep{campbell1979assessing}.
Recent work has clarified that Goodhart effects arise through multiple mechanisms, including selection, causal intervention, and adversarial behavior \citep{manheim2019categorizing}.
The result is a feedback loop that actively narrows scientific horizons: recent empirical work confirms that while AI boosts individual productivity, it leads to a collective ``flattening'' of discovery, where outputs cluster around existing canons rather than exploring risky novelty \cite{hao2026artificial}.

Our model also draws from the literature on metric-driven organizations.
In the ``audit society'' view, modern institutions increase reliance on formal indicators because indicators are legible and defensible under accountability constraints \citep{power1997audit,strathern2000audit}.
This shift can improve consistency but can also displace the underlying purpose when indicators become the primary object of optimization \citep{muller2018tyranny}.

Our framework unifies these classical sociological insights into a formal control model.
While \cite{smaldino2016natural} famously characterized the proliferation of bad science as a ``natural selection'' process driven by incentives, and \cite{ioannidis2005most} identified the statistical inevitability of false findings, these works largely treated the verification constraint (human bandwidth) as a fixed constant.
Our work differs by introducing AI as a dynamic variable. We argue that the ``natural selection'' toward Universe B is not absolute destiny, but a function of verification cost, which modern AI has the potential to fundamentally alter. 

\subsection{Limited Attention and Verification Bottlenecks}

We treat peer review as an attention allocation process under scarcity.
This view aligns with rational inattention, which models decision makers as optimizing under information processing constraints \citep{sims2003rational}.
It also aligns with multi-task incentive theory, which predicts that when agents face many tasks but are evaluated on a subset, effort shifts toward what is measured \citep{holmstrom1991multitask}.

\subsection{Peer Review Datasets and Automation}

Several datasets and models study peer review as text and as a prediction problem.
PeerRead provides a public corpus of reviews and accept decisions across major venues \citep{kang-etal-2018-dataset}.
Prediction systems include interpretable decision rules for acceptance forecasting \citep{bao2021acceptance}.

The recent wave of LLMs revived interest in automated reviewing and, more broadly, AI-assisted scholarly critique \citep{zou2024chatgpt}.
Empirical studies have evaluated LLMs as review generators and compared their outputs with expert reviews \citep{zhou-etal-2024-llm,shin2025automatic}.
Complementary work investigates how LLM-assisted feedback can enrich critique beyond simple novelty screening, for example by producing more specific, actionable feedback \citep{afzal2025beyond}.
Other studies test whether LLM-derived scores correlate with conference or journal outcomes \citep{thelwall2025predictive}. 
Finally, structured approaches aim to formalize reviewer reasoning workflows (e.g., via causal or structured reasoning representations), which can guide where and how LLM-based interventions should assist humans \citep{dycke2025stricta}.

At the same time, LLMs already affect the peer review ecosystem as writers and editors.
For example, a large-scale analysis estimated measurable LLM modification in review text after the release of ChatGPT \citep{liang2024monitoring}.

Our stance differs from most ``LLM-as-reviewer'' work.
We do not treat agreement with human review text or scores as the primary objective.
We treat it as a potentially misleading target when the human process itself operates in a proxy-sovereign regime.
We align more closely with work warning against over-automation of peer review \citep{akella2025prereview} and with evidence that automated assistants can be gamed when judging proxy compliance \citep{goldberg2024checklist}.

\subsection{Psychohistory and Sociophysics}
Our approach parallels the ``Psychohistory'' concept envisioned by Asimov, which posits that while individual trajectories are unpredictable, the statistical behavior of massive populations follows deterministic laws \cite{asimov1951foundation}. In the real world, this intuition is formalized in Cliodynamics and Sociophysics. Turchin's theory of "elite overproduction" \citep{turchin2016ages} offers a striking analog to our model of claim inflation ($R$): when the production of claims outstrips the community's verification capacity ($\Be$), the system enters a phase of instability similar to secular cycles in history. Furthermore, Tainter's framework on the collapse of complex societies \citep{tainter1988collapse} explains the phenomenon of defensive proxy optimization: as a field matures, the marginal return on "complexity" (e.g., denser appendices, heavier tables) diminishes, yet the system demands more of it to maintain the illusion of progress, accelerating the drift toward Universe B. Our work translates these macro-historical dynamics into the micro-incentive structure of peer review.
\section{Proofs}
\label{app:proofs}

\subsection{Proof of Theorem~\ref{thm:tc}}

\begin{proof}
We start from the score model in \eqref{eq:score},
\[
\Score = \Truth + (1-q)\Delta.
\]
Assumption~\ref{assump:mixture} states that $\Delta$ has mean zero and is independent of $\Truth$.

We first compute the covariance between $\Score$ and $\Truth$.
Using linearity of covariance, we have
\begin{align*}
\Cov(\Score,\Truth)
&= \Cov\!\left(\Truth + (1-q)\Delta,\,\Truth\right) \\
&= \Cov(\Truth,\Truth) + (1-q)\Cov(\Delta,\Truth).
\end{align*}
Independence implies $\Cov(\Delta,\Truth)=0$, so
\[
\Cov(\Score,\Truth) = \Var(\Truth).
\]

We next compute the variance of $\Score$.
Using $\Var(X+Y)=\Var(X)+\Var(Y)+2\Cov(X,Y)$, we obtain
\begin{align*}
\Var(\Score)
&= \Var\!\left(\Truth + (1-q)\Delta\right) \\
&= \Var(\Truth) + (1-q)^2\Var(\Delta) + 2(1-q)\Cov(\Truth,\Delta).
\end{align*}
Independence again implies $\Cov(\Truth,\Delta)=0$, so
\[
\Var(\Score) = \Var(\Truth) + (1-q)^2\Var(\Delta).
\]

By the definition of correlation,
\[
\Corr(\Score,\Truth)
= \frac{\Cov(\Score,\Truth)}{\sqrt{\Var(\Score)\Var(\Truth)}}
= \frac{\Var(\Truth)}{\sqrt{\Var(\Truth)\left(\Var(\Truth)+(1-q)^2\Var(\Delta)\right)}}.
\]
We cancel a factor of $\sqrt{\Var(\Truth)}$ and obtain
\[
\tc = \Corr(\Score,\Truth)
= \left(1 + (1-q)^2\frac{\Var(\Delta)}{\Var(\Truth)}\right)^{-1/2}.
\]
Recalling the definition $\snr=\Var(\Delta)/\Var(\Truth)$ from \eqref{eq:snr} completes the proof.
\end{proof}

\subsection{Proof of Corollary~\ref{cor:budget}}

\begin{proof}
Theorem~\ref{thm:tc} states that
\[
\tc = \left(1 + (1-q)^2\snr\right)^{-1/2}.
\]
We want to characterize when $\tc \ge \tc_{\min}$.

Because all terms are nonnegative and $\tc_{\min}\in(0,1)$, we can square both sides without changing the inequality direction:
\[
\left(1 + (1-q)^2\snr\right)^{-1} \ge \tc_{\min}^2.
\]
We invert both sides.
Inversion reverses inequalities for positive numbers, so we obtain
\[
1 + (1-q)^2\snr \le \tc_{\min}^{-2}.
\]
We subtract one from both sides and divide by $\snr>0$:
\[
(1-q)^2 \le \frac{\tc_{\min}^{-2}-1}{\snr}.
\]
The right-hand side is nonnegative because $\tc_{\min}\in(0,1)$.
We take square roots on both sides and obtain
\[
1-q \le \sqrt{\frac{\tc_{\min}^{-2}-1}{\snr}}.
\]
Finally, we rearrange to conclude that
\[
q \ge 1 - \sqrt{\frac{\tc_{\min}^{-2}-1}{\snr}},
\]
which matches \eqref{eq:qmin}.
\end{proof}

\subsection{Proof of Theorem~\ref{thm:effort}}

\begin{proof}
We consider the optimization problem
\[
\max_{e\in[0,1]} U(e)
\quad \text{where} \quad
U(e)=f(e)+(1-q)\gamma(1-e).
\]
Assumption~\ref{assump:effort} states that $f$ is increasing, concave, and differentiable.
The function $e \mapsto (1-q)\gamma(1-e)$ is linear in $e$.
Therefore $U$ is concave on $[0,1]$, as a sum of a concave function and a linear function.
A concave function on a compact interval attains its maximum, and any point that satisfies the first-order optimality condition and lies in the interior is a global maximizer.

We compute the derivative of $U$:
\[
U'(e) = f'(e) - (1-q)\gamma.
\]
If an optimal solution $e^\star$ lies strictly inside $(0,1)$, then the necessary optimality condition is $U'(e^\star)=0$.
This yields the first-order condition
\[
f'(e^\star) = (1-q)\gamma,
\]
which is \eqref{eq:foc}.

We now characterize when the optimum occurs at the boundary $e=0$.
Because $U$ is concave, the derivative $U'(e)$ is nonincreasing in $e$.
If $U'(0)\le 0$, then $U'(e)\le 0$ for all $e\in[0,1]$, and $U$ is nonincreasing.
In that case, $e^\star=0$ is optimal.
We compute
\[
U'(0) = f'(0) - (1-q)\gamma.
\]
The condition $U'(0)\le 0$ is equivalent to $(1-q)\gamma \ge f'(0)$.
This proves the stated collapse condition.
\end{proof}

\section{A Stylized Goodhart Amplification Mechanism}
\label{app:goodhart-amplification}

The main text treats $\mathrm{Var}(\Delta)$ as a primitive. In many fields, however, $\mathrm{Var}(\Delta)$ is endogenous:
when incentives reward proxy performance, researchers can expand the \emph{search budget} devoted to proxy optimization.
This appendix sketches a minimal mechanism---\emph{best-of-$K$ selection}---showing how proxy-selection noise can grow quickly
with verification pressure $\Lambda$.

\subsection{Best-of-$K$ Selection Under Proxy Optimization}\label{app:goodhart-best-of-k}

Consider a proxy objective (e.g., benchmark score) that can be improved by repeated attempts such as hyperparameter sweeps,
prompt sweeps, or variant sweeps. Let $X\ge 0$ denote the proxy gain from a single attempt, and let $X_1,\dots,X_K$ be i.i.d.\ copies of $X$.
If only the best outcome is reported, the reported proxy gain is
\[
G_K := \max\{X_1,\dots,X_K\}.
\]
If the best-of-$K$ gain does not reliably survive independent reruns, a simple stylized contribution to the proxy error is the
\emph{centered selection term}
\[
\Delta_{\mathrm{sel}} := G_K - \mathbb{E}[G_K],
\qquad\text{so that}\qquad
\mathrm{Var}(\Delta) \;\ge\; \mathrm{Var}(\Delta_{\mathrm{sel}}) = \mathrm{Var}(G_K).
\]
The key driver is how $K$ scales. Verification pressure creates competitive pressure, and AI reduces the marginal cost per attempt;
both effects can increase $K$. We therefore treat $K$ as a (weakly) increasing function of $\Lambda$.

A standard order-statistics identity gives, for any CDF $F$ of $X$,
\[
\mathbb{P}(G_K \le x) = \mathbb{P}(X\le x)^K = F(x)^K.
\]
Thus any typical scale of $G_K$ (e.g., quantiles) increases with $K$.

\subsection{Heavy-Tailed Proxy Gains Yield Fast Growth}\label{app:goodhart-heavy-tail}

Growth can be especially rapid when proxy gains are heavy-tailed: most tweaks yield tiny gains, but rare tweaks yield very large gains.
We use Pareto tails as a transparent example.

\begin{assumption}[Pareto proxy gain]\label{assump:pareto-proxy-gain}
The proxy gain $X$ follows a Pareto distribution with parameters $x_{\min}>0$ and $\alpha>2$:
\[
\mathbb{P}(X > x)=\left(\frac{x_{\min}}{x}\right)^{\alpha},\qquad x\ge x_{\min}.
\]
The condition $\alpha>2$ ensures $X$ has finite variance.
\end{assumption}

\begin{proposition}[Typical scale of best-of-$K$ grows polynomially]\label{prop:best-of-k-typical}
Under Assumption~\ref{assump:pareto-proxy-gain}, the median of $G_K$ is
\[
\operatorname{median}(G_K)
= x_{\min}\bigl(1-2^{-1/K}\bigr)^{-1/\alpha}
\;\approx\;
 x_{\min}\Bigl(\tfrac{K}{\ln 2}\Bigr)^{1/\alpha}
\quad\text{for large $K$}.
\]
Consequently, the typical scale of best-of-$K$ outcomes grows on the order of $K^{1/\alpha}$.
When $\alpha>2$, this implies that the dispersion of $G_K$ (and thus the selection component of $\mathrm{Var}(\Delta)$) increases at least polynomially with $K$.
\end{proposition}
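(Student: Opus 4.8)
The plan is to compute the median of $G_K$ directly from the order-statistics identity $\mathbb{P}(G_K \le x) = F(x)^K$, and then extract the asymptotic growth rate. First I would write down the Pareto CDF from Assumption~\ref{assump:pareto-proxy-gain}, namely $F(x) = 1 - (x_{\min}/x)^{\alpha}$ for $x \ge x_{\min}$. The median $m_K$ of $G_K$ solves $F(m_K)^K = 1/2$, i.e. $F(m_K) = 2^{-1/K}$. Substituting the Pareto form gives $1 - (x_{\min}/m_K)^{\alpha} = 2^{-1/K}$, hence $(x_{\min}/m_K)^{\alpha} = 1 - 2^{-1/K}$, which rearranges to $m_K = x_{\min}(1 - 2^{-1/K})^{-1/\alpha}$. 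This is the exact expression in the proposition.

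Next I would establish the large-$K$ asymptotic. Writing $2^{-1/K} = e^{-(\ln 2)/K}$ and using $1 - e^{-t} = t + O(t^2)$ as $t \to 0$ with $t = (\ln 2)/K$, we get $1 - 2^{-1/K} = (\ln 2)/K + O(1/K^2) = \frac{\ln 2}{K}\bigl(1 + O(1/K)\bigr)$. Raising to the power $-1/\alpha$ yields $(1 - 2^{-1/K})^{-1/\alpha} = \bigl(\tfrac{K}{\ln 2}\bigr)^{1/\alpha}\bigl(1 + O(1/K)\bigr)$, so $m_K \approx x_{\min}(K/\ln 2)^{1/\alpha}$ for large $K$, which is the claimed approximation. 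This establishes that the typical scale of $G_K$ grows on the order of $K^{1/\alpha}$.

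For the final dispersion claim, I would argue that since $\alpha > 2$, the variance of a single Pareto draw is finite, and a scaling/homogeneity argument shows $\mathrm{Var}(G_K)$ grows at the same polynomial rate as the square of the typical scale. Concretely, one can bound $\mathrm{Var}(G_K)$ below by considering the gap between, say, the median and the lower quartile of $G_K$: both the $p$-th quantile $q_{K,p}$ of $G_K$ satisfies $q_{K,p} = x_{\min}(1 - p^{1/K})^{-1/\alpha} \sim x_{\min}(K/\ln(1/p))^{1/\alpha}$, so the interquartile range of $G_K$ scales like $K^{1/\alpha}$, which forces $\mathrm{Var}(G_K) = \Omega(K^{2/\alpha})$. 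Combined with $\mathrm{Var}(\Delta) \ge \mathrm{Var}(G_K)$ from the centered-selection inequality in Appendix~\ref{app:goodhart-best-of-k}, this gives the stated at-least-polynomial growth of the selection component of $\mathrm{Var}(\Delta)$ with $K$.

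The main obstacle I anticipate is making the step from "the quantiles/typical scale of $G_K$ grow like $K^{1/\alpha}$" to "$\mathrm{Var}(G_K)$ grows at least polynomially" fully rigorous rather than heuristic: one must rule out the degenerate possibility that the distribution of $G_K$, while shifting outward, concentrates tightly around its median. The clean fix is the interquartile-range lower bound on variance sketched above (valid for any distribution: $\mathrm{Var}(Y) \ge c\,(\mathrm{IQR}(Y))^2$ for a universal constant $c$), applied with the explicit quantile formula $q_{K,p} = x_{\min}(1 - p^{1/K})^{-1/\alpha}$; expanding each quantile to leading order in $1/K$ shows the IQR is $\Theta(K^{1/\alpha})$, so the concentration scenario is impossible and the polynomial lower bound on $\mathrm{Var}(G_K)$ follows. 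Everything else is routine algebra and a first-order Taylor expansion.
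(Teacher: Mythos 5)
Your derivation of the exact median and its large-$K$ asymptotics is precisely the paper's argument: solve $F(m_K)^K=\tfrac12$ with the Pareto CDF to get $m_K=x_{\min}(1-2^{-1/K})^{-1/\alpha}$, then expand $2^{-1/K}=e^{-(\ln 2)/K}$ to first order. Where you genuinely go beyond the paper is the final dispersion claim: the appendix's derivation stops at the median and simply asserts that the growing typical scale ``implies'' at-least-polynomial growth of the selection variance, whereas you correctly identify that an outward-shifting median alone does not exclude tight concentration, and you close that gap rigorously using the quantile formula $q_{K,p}=x_{\min}\bigl(1-p^{1/K}\bigr)^{-1/\alpha}\sim x_{\min}\bigl(K/\ln(1/p)\bigr)^{1/\alpha}$ together with the universal bound $\mathrm{Var}(Y)\ge c\,\mathrm{IQR}(Y)^2$ (valid with, e.g., $c=1/16$ by Chebyshev, since at least a quarter of the mass lies on the far side of the mean from one of the quartiles), yielding $\mathrm{Var}(G_K)=\Omega(K^{2/\alpha})$. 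This is a strictly more complete argument than the one printed in the appendix, and it is consistent with the extreme-value limit $G_K/(x_{\min}K^{1/\alpha})\Rightarrow \mathrm{Fr\acute{e}chet}(\alpha)$, under which $\mathrm{Var}(G_K)\asymp K^{2/\alpha}$ for $\alpha>2$; note that your IQR lower bound does not even require $\alpha>2$, which is needed only so that single-draw (and limiting) variances are finite.
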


\noindent\emph{Derivation (standard order-statistics).}
For any $x\ge x_{\min}$, $\mathbb{P}(G_K\le x)=F(x)^K$.
The median $m_K$ solves $F(m_K)^K=\tfrac12$, hence $F(m_K)=2^{-1/K}$.
For Pareto, $F(x)=1-(x_{\min}/x)^{\alpha}$, so
$1-(x_{\min}/m_K)^{\alpha}=2^{-1/K}$, yielding
$m_K=x_{\min}(1-2^{-1/K})^{-1/\alpha}$.
For large $K$, $2^{-1/K}=\exp(- (\ln 2)/K)\approx 1-(\ln 2)/K$, which gives the stated approximation.

\subsection{A Route to Exponential Amplification in Verification Pressure}\label{app:goodhart-exponential}

Proposition~\ref{prop:best-of-k-typical} gives polynomial growth in $K$. If $K$ itself grows rapidly with verification pressure,
then proxy-selection noise can grow rapidly with $\Lambda$.

\begin{corollary}[A route to exponential growth in proxy-selection noise]\label{cor:exp-growth}
Assume $K(\Lambda)=K_0\,\exp\bigl(\beta(\Lambda-1)_+\bigr)$ for constants $K_0\ge 1$ and $\beta>0$, where $(x)_+=\max\{x,0\}$.
Under Assumption~\ref{assump:pareto-proxy-gain}, the median (and typical scale) of $G_{K(\Lambda)}$ grows exponentially in $\Lambda$:
\[
\operatorname{median}\bigl(G_{K(\Lambda)}\bigr)
\;\approx\;
 x_{\min}\Bigl(\tfrac{K_0}{\ln 2}\Bigr)^{1/\alpha}\exp\Bigl(\tfrac{\beta}{\alpha}(\Lambda-1)_+\Bigr).
\]
Therefore, the contribution of best-of-$K$ selection to $\mathrm{Var}(\Delta)$ can increase exponentially with verification pressure.
\end{corollary}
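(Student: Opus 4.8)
The plan is to obtain the displayed asymptotic by a direct substitution into Proposition~\ref{prop:best-of-k-typical}, and then to upgrade the resulting statement about the ``typical scale'' of $G_{K(\Lambda)}$ into the claimed statement about $\mathrm{Var}(\Delta)$ via a scaling argument particular to the Pareto family.

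First I would recall the exact identity from Proposition~\ref{prop:best-of-k-typical}, $\operatorname{median}(G_K)=x_{\min}(1-2^{-1/K})^{-1/\alpha}$, together with its large-$K$ expansion: since $2^{-1/K}=\exp(-(\ln 2)/K)=1-(\ln 2)/K+O(K^{-2})$, we get $\operatorname{median}(G_K)\approx x_{\min}(K/\ln 2)^{1/\alpha}$. Substituting $K=K(\Lambda)=K_0\exp(\beta(\Lambda-1)_+)$ and using multiplicativity of the $1/\alpha$ power, $\bigl(K_0 e^{\beta(\Lambda-1)_+}/\ln 2\bigr)^{1/\alpha}=(K_0/\ln 2)^{1/\alpha}\exp\!\bigl(\tfrac{\beta}{\alpha}(\Lambda-1)_+\bigr)$, which is exactly the formula in the corollary. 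For $\Lambda\le 1$ the exponent vanishes and the median sits at the constant $x_{\min}(K_0/\ln 2)^{1/\alpha}$ (consistent with the $q=1$ regime); for $\Lambda>1$ it grows exponentially in $\Lambda$ at rate $\beta/\alpha$.

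Second, to justify the final sentence about $\mathrm{Var}(\Delta)$, I would observe that $Y:=G_K/x_{\min}$ has CDF $y\mapsto(1-y^{-\alpha})^K$ on $[1,\infty)$, a pure function of $K$ with tail index $\alpha>2$. A direct second-moment computation — writing $\mathbb{E}[Y^2]=\int_1^\infty 2y\bigl(1-(1-y^{-\alpha})^K\bigr)\,dy$ and substituting $u=y^{-\alpha}$, which reduces the integral to an incomplete Beta integral $\frac{2}{\alpha}\int_0^1 u^{-2/\alpha-1}(1-(1-u)^K)\,du$ — shows this converges precisely because $\alpha>2$ and scales as $K^{2/\alpha}$; the same substitution with exponent $-1/\alpha$ gives $\mathbb{E}[Y]\asymp K^{1/\alpha}$, hence $\mathrm{Var}(G_K)=\mathrm{Var}(\Delta_{\mathrm{sel}})=c_\alpha\,x_{\min}^2\,K^{2/\alpha}(1+o(1))$ for a finite constant $c_\alpha>0$. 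Chaining with $K(\Lambda)=K_0 e^{\beta(\Lambda-1)_+}$ yields $\mathrm{Var}(\Delta_{\mathrm{sel}})\asymp\exp\!\bigl(\tfrac{2\beta}{\alpha}(\Lambda-1)_+\bigr)$, so the best-of-$K$ contribution to $\mathrm{Var}(\Delta)$ grows exponentially in verification pressure, as claimed.

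The algebra in the first two paragraphs is routine; the only step that needs genuine care is passing from ``quantiles of $G_K$ grow like $K^{1/\alpha}$'' to ``$\mathrm{Var}(G_K)$ grows like $K^{2/\alpha}$,'' because this requires controlling the upper tail of $G_K$ uniformly in $K$ so that the second moment is governed by the bulk near the median rather than by rare extreme attempts. This is exactly where $\alpha>2$ is essential: for $\alpha\le 2$ the variance is infinite and the statement fails. I would discharge it either by the explicit Beta-integral evaluation sketched above (checking that the leading $K^{2/\alpha}$ terms of $\mathbb{E}[Y^2]$ and $(\mathbb{E}[Y])^2$ do not cancel), or, more abstractly, by invoking the Fréchet extreme-value limit with normalizing constants $a_K\propto K^{1/\alpha}$ together with uniform integrability of $(G_K/a_K)^2$ for $\alpha>2$.
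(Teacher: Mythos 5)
Your proposal is correct, and for the displayed formula it follows exactly the paper's (implicit) route: the paper proves the median identity and large-$K$ expansion in Proposition~\ref{prop:best-of-k-typical} and obtains the corollary by the same substitution $K=K_0\exp(\beta(\Lambda-1)_+)$ and the power identity $(K_0e^{\beta(\Lambda-1)_+})^{1/\alpha}=K_0^{1/\alpha}e^{(\beta/\alpha)(\Lambda-1)_+}$, so that part is a match. Where you genuinely go beyond the paper is the final sentence about $\mathrm{Var}(\Delta)$: the paper leaves this at the qualitative level (``the typical scale grows, hence the dispersion, hence the selection contribution to $\mathrm{Var}(\Delta)$ can grow exponentially''), whereas you actually compute second moments, showing via the substitution $u=y^{-\alpha}$ (or the Fr\'echet limit $G_K/(x_{\min}K^{1/\alpha})\Rightarrow$ Fr\'echet$(\alpha)$ with uniform integrability for $\alpha>2$) that $\mathbb{E}[G_K]\asymp K^{1/\alpha}$, $\mathbb{E}[G_K^2]\asymp K^{2/\alpha}$, and that the leading terms do not cancel since the limiting Fr\'echet law is nondegenerate, giving $\mathrm{Var}(G_K)\asymp x_{\min}^2K^{2/\alpha}$ and hence growth at rate $\exp\bigl(\tfrac{2\beta}{\alpha}(\Lambda-1)_+\bigr)$. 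That is a strictly stronger and more careful justification of the variance claim than the paper offers, and your identification of $\alpha>2$ as the exact point where the argument would fail is the right sensitivity analysis. Two trivial remarks: your tail formula for $\mathbb{E}[Y^2]$ omits the additive contribution from $y\in[0,1]$ (a constant $1$, harmless for the asymptotics), and it is worth stating explicitly that the variance grows at exponential rate $2\beta/\alpha$, i.e.\ twice the rate appearing in the median display, which is consistent with the corollary's wording but not literally the same exponent.
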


This appendix is intentionally stylized. It does \emph{not} claim that all fields exhibit Pareto proxy gains or exponential growth in $K$.
The point is qualitative: once (i) the number of proxy-optimization attempts $K$ becomes endogenous and increases with pressure, and
(ii) selection picks the best outcome, proxy-selection noise can grow quickly.
This pushes up $r=\mathrm{Var}(\Delta)/\mathrm{Var}(T)$ and can accelerate drift toward proxy sovereignty.

% Optional (if you want to add citations without expanding the appendix):
%   % Standard references for order statistics / extreme values include
%   %   David & Nagaraja, \emph{Order Statistics}; Resnick, \emph{Extreme Values, Regular Variation and Point Processes}.
%   % You can cite them in the first paragraph of C.2 if desired.

\section{Measuring Signal Shrinkage in Practice}
\label{app:saturation}

The main text uses $\Var(\Truth)$ as a proxy for how readable the true signal is in a given venue-year.
This appendix discusses measurement strategies and practical proxies.
Our goal is not to define ``truth'' perfectly.
Our goal is to approximate whether a community faces a low signal-to-noise regime.

\subsection{Benchmark Headroom as A Saturation Proxy}

In many machine learning subfields, progress is reported on fixed benchmarks.
If a benchmark metric approaches a ceiling, then headroom shrinks.
This creates two effects.
First, true improvements become smaller.
Second, the same absolute amount of evaluation noise becomes more influential.

A simple headroom proxy is $1-M_t$, where $M_t$ is the best reported metric value in year $t$ on a bounded metric.
If headroom shrinks over time, we expect the variance of improvements $\Delta M_t = M_t - M_{t-1}$ to shrink.
This shrinkage is one empirical signal of declining $\Var(\Truth)$.

This proxy has limitations.
Benchmarks can shift, and reported best values can reflect selective reporting.
For this reason, headroom is most informative when combined with verification artifacts such as reruns or standardized evaluation suites.

\subsection{Variance Decomposition via Audits}

We can estimate $\snr=\Var(\Delta)/\Var(\Truth)$ without observing $\Truth$ directly by using audits as partial ground truth.
Suppose an audit protocol reruns a reported experiment with controlled randomness and standardized baselines.
Let $Y$ denote the originally reported gain and let $Y'$ denote the audited gain.
A large dispersion of $Y-Y'$ indicates large proxy error variance.

If we treat $Y'$ as a higher-fidelity measurement of the underlying effect, then the variance of $Y'$ across papers approximates $\Var(\Truth)$, while the variance of $Y-Y'$ approximates $\Var(\Delta)$.
This approach is imperfect, but it matches the design goal of verification-first review: we want evidence that supports variance decomposition.

\subsection{Long-Horizon Stability as A Truth Proxy}

A central challenge is that short-horizon venue decisions cannot directly observe long-horizon truth.
We can therefore use stability proxies.

One proxy is \emph{baseline durability}.
If a method remains competitive when later work introduces stronger baselines and broader sweeps, the original claim likely had higher truth content.

Another proxy is \emph{replication survival}.
If independent groups reproduce the claimed gain, the claim likely had higher truth content.
Community replication datasets can support this measurement.

A third proxy is \emph{mechanistic compression}.
In some fields, a contribution becomes durable when it compresses into a reusable concept, theorem, or primitive.
Citation patterns can partly capture this, although they remain social signals.

\subsection{Why Saturation Interacts With Verification Pressure}

Our main claim about saturation is that it amplifies the effect of verification pressure.
When $\Var(\Truth)$ shrinks, the same level of proxy noise produces a larger $\snr$.
In that regime, even honest incremental work becomes hard to distinguish.
This encourages communities to rely more on legible proxies.
It also encourages researchers to invest in proxy optimization because small, fragile gains can win selection if they look like progress.

This interaction helps explain why mature engineering domains can drift toward proxy sovereignty even when they retain strong technical cultures.
When true improvements become extremely fine-grained, the system needs more decisive verification per claim to maintain coupling.
If bandwidth does not scale, the system drifts.

\section{Citations as Proxy Signals and a Winner's Curse}
\label{app:citations-proxy}

We can use our framework to reason about citation counts. In many fields, citations act as a low-cost recognition signal. Citations can reflect genuine scientific value, but they can also reflect visibility, coordination effects, and social amplification. In this appendix we treat citations as another proxy score and we derive a simple \emph{winner's curse} effect: when truth-coupling is low, papers with extreme citation scores can have modest latent scientific value.

\subsection{A Stylized Citation Score Model}

We let $T$ denote latent scientific value, as in the main text. We define a centered real-valued citation score $C$. In practice, we can construct $C$ from citation counts by taking a log transform and then standardizing, so $C$ has mean zero and finite variance.

\begin{assumption}[Citation proxy decomposition]
\label{assump:citation-decomp}
We assume citations follow the same mixture structure as venue scores. Specifically,
\begin{equation}
\label{eq:citation-mixture}
C = T + (1-q)\Delta_C,
\end{equation}
where $q \in [0,1]$ is the effective verification rate from the main text. The random variable $\Delta_C$ captures citation-specific proxy effects. We assume $\mathbb{E}[\Delta_C]=0$ and $\Delta_C$ is independent of $T$.
\end{assumption}

We define the citation noise-to-signal ratio
\begin{equation}
\label{eq:rC-def}
r_C := \frac{\mathrm{Var}(\Delta_C)}{\mathrm{Var}(T)}.
\end{equation}

\begin{corollary}[Citation truth-coupling]
\label{thm:citation-coupling}
Under Assumption~\ref{assump:citation-decomp}, the correlation between citations and latent value equals
\begin{equation}
\label{eq:rhoC}
\rho_C := \mathrm{Corr}(C,T) = \Bigl(1 + (1-q)^2 r_C\Bigr)^{-1/2}.
\end{equation}
\end{corollary}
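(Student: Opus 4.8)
The plan is to observe that Corollary~\ref{thm:citation-coupling} is nothing more than Theorem~\ref{thm:tc} applied to a relabeled instance of the same mixture model. First I would point out that Assumption~\ref{assump:citation-decomp} gives $C = T + (1-q)\Delta_C$ with $\mathbb{E}[\Delta_C]=0$ and $\Delta_C \perp T$, which is structurally identical to the score model in \eqref{eq:score} once we substitute $\Score \mapsto C$ and $\Delta \mapsto \Delta_C$. Since the proof of Theorem~\ref{thm:tc} uses only (i) the additive decomposition, (ii) the zero-mean property of the noise term, and (iii) independence of the noise from $\Truth$, every line of that argument transfers verbatim.

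Concretely, I would carry out the two covariance computations directly. By linearity of covariance and independence, $\Cov(C,\Truth) = \Var(\Truth) + (1-q)\Cov(\Delta_C,\Truth) = \Var(\Truth)$. Likewise, $\Var(C) = \Var(\Truth) + (1-q)^2\Var(\Delta_C) + 2(1-q)\Cov(\Truth,\Delta_C) = \Var(\Truth) + (1-q)^2\Var(\Delta_C)$. Plugging these into the definition of correlation,
\[
\rho_C = \frac{\Cov(C,\Truth)}{\sqrt{\Var(C)\Var(\Truth)}} = \frac{\Var(\Truth)}{\sqrt{\Var(\Truth)\bigl(\Var(\Truth) + (1-q)^2\Var(\Delta_C)\bigr)}},
\]
and cancelling a factor of $\sqrt{\Var(\Truth)}$ (which is positive since $\Var(\Truth)>0$) yields $\rho_C = \bigl(1 + (1-q)^2 \Var(\Delta_C)/\Var(\Truth)\bigr)^{-1/2}$. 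Recalling the definition $r_C = \Var(\Delta_C)/\Var(\Truth)$ from \eqref{eq:rC-def} gives exactly \eqref{eq:rhoC}.

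There is essentially no obstacle here: the only mild point worth noting is that $C$ is assumed centered, but the formula for correlation is invariant under recentering, so this plays no role; and $\Var(\Truth)>0$ is inherited from the standing assumption on $\Truth$ in the main text. Thus the cleanest write-up simply invokes Theorem~\ref{thm:tc} with the stated substitution and remarks that the hypotheses match. The harder conceptual content — if any — lies not in this corollary but in justifying Assumption~\ref{assump:citation-decomp} itself (that log-standardized citation counts inherit the same mixture structure with the \emph{same} $q$), which is a modeling choice rather than something to be proved.
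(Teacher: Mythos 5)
Your proposal is correct and matches the paper's own treatment: the paper simply notes that the corollary "is derived directly from Theorem~\ref{thm:tc}" under the substitution $\Score \mapsto C$, $\Delta \mapsto \Delta_C$, which is exactly your argument (and your explicit covariance computation reproduces the proof of Theorem~\ref{thm:tc} verbatim). No gap to report.
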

Corollary~\ref{thm:citation-coupling} is derived directly from Theorem~\ref{thm:tc}.

\subsection{A Winner's Curse for Highly Cited Papers}

Corollary~\ref{thm:citation-coupling} shows that citations can decouple from latent value when $q$ is small or $r_C$ is large. We now derive a sharper statement about selection on high citation scores. We add a Gaussian assumption only to obtain a closed-form conditional expectation.

\begin{assumption}[Gaussian proxy model]
\label{assump:citation-gaussian}
In addition to Assumption~\ref{assump:citation-decomp}, we assume $T$ and $\Delta_C$ are independent Gaussian random variables with mean zero and finite variances.
\end{assumption}

\begin{theorem}[Winner's curse for citation scores]
\label{thm:winners-curse-citations}
Under Assumptions~\ref{assump:citation-decomp} and \ref{assump:citation-gaussian}, for any $c \in \mathbb{R}$ we have
\begin{equation}
\label{eq:cond-mean-T-given-C}
\mathbb{E}[T\mid C=c] = \rho_C^2\,c.
\end{equation}
Moreover, the conditional variance equals
\begin{equation}
\label{eq:cond-var-T-given-C}
\mathrm{Var}(T\mid C=c) = \mathrm{Var}(T)\bigl(1-\rho_C^2\bigr),
\end{equation}
and the expected proxy inflation in the citation score equals
\begin{equation}
\label{eq:cond-inflation}
\mathbb{E}[C-T\mid C=c] = \bigl(1-\rho_C^2\bigr)c.
\end{equation}
\end{theorem}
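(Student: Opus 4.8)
The plan is to exploit joint Gaussianity and then recycle the second-moment computation already carried out in the proof of Theorem~\ref{thm:tc}. First I would observe that, under Assumption~\ref{assump:citation-gaussian}, the pair $(T,C) = (T,\,T+(1-q)\Delta_C)$ is a fixed linear image of the independent Gaussian vector $(T,\Delta_C)$, hence is itself jointly Gaussian with mean zero. This is the only structural input needed; everything after it is the standard Gaussian conditioning formula.

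Second I would record the relevant second moments. Repeating verbatim the covariance and variance calculations from the proof of Theorem~\ref{thm:tc}, now with $C$ in place of $\Score$ and $\Delta_C$ in place of $\Delta$, gives $\Cov(T,C)=\Var(T)$ and $\Var(C)=\Var(T)+(1-q)^2\Var(\Delta_C)$, the latter strictly positive because $\Var(T)>0$. Dividing, $\Cov(T,C)/\Var(C) = \Var(T)/(\Var(T)+(1-q)^2\Var(\Delta_C)) = (1+(1-q)^2 r_C)^{-1}$, which by Corollary~\ref{thm:citation-coupling} equals $\rho_C^2$; likewise $\Cov(T,C)^2/(\Var(T)\Var(C)) = \rho_C^2$, which is just the definition of squared correlation.

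Third I would invoke the Gaussian conditional law: for zero-mean jointly Gaussian $(T,C)$ with $\Var(C)>0$, one has $\E[T\mid C=c] = (\Cov(T,C)/\Var(C))\,c$ and $\Var(T\mid C=c) = \Var(T) - \Cov(T,C)^2/\Var(C)$, the latter independent of $c$. Substituting the two ratios from the previous step yields \eqref{eq:cond-mean-T-given-C} directly, and \eqref{eq:cond-var-T-given-C} after factoring out $\Var(T)$. Finally \eqref{eq:cond-inflation} follows by linearity of conditional expectation: $\E[C-T\mid C=c] = c - \rho_C^2 c = (1-\rho_C^2)c$.

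I do not anticipate a genuine obstacle; the statement is a specialization of textbook Gaussian regression. The one point worth stating carefully is that $(T,C)$ is \emph{jointly} (not merely marginally) Gaussian --- this is what licenses both the linear conditional mean and the $c$-free conditional variance --- and it is secured by writing $(T,C) = A\,(T,\Delta_C)^\top$ for a constant matrix $A$ and citing closure of the Gaussian family under linear maps.
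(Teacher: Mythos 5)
Your proposal is correct and follows essentially the same route as the paper: both compute $\Cov(T,C)=\Var(T)$ and $\Var(C)=\Var(T)+(1-q)^2\Var(\Delta_C)$, identify the regression coefficient $\Cov(T,C)/\Var(C)$ with $\rho_C^2$, and obtain \eqref{eq:cond-inflation} by linearity. The only cosmetic difference is that the paper re-derives the Gaussian conditioning formula from scratch via the residual $U = T-\beta C$ (showing $\Cov(U,C)=0$ and hence independence), whereas you cite the textbook Gaussian regression law directly; the substance is identical.
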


\begin{proof}
We write $a:=1-q$ for brevity. By \eqref{eq:citation-mixture}, we have $C=T+a\Delta_C$. Since $T$ and $\Delta_C$ are independent Gaussian, the pair $(T,C)$ is jointly Gaussian.

We first compute $\mathrm{Cov}(T,C)$ and $\mathrm{Var}(C)$. Using linearity and independence,
\begin{align*}
\mathrm{Cov}(T,C)
&= \mathrm{Cov}\bigl(T,\,T+a\Delta_C\bigr)
= \mathrm{Var}(T) + a\,\mathrm{Cov}(T,\Delta_C)
= \mathrm{Var}(T),
\end{align*}
and
\begin{align*}
\mathrm{Var}(C)
&= \mathrm{Var}\bigl(T+a\Delta_C\bigr)
= \mathrm{Var}(T) + a^2\mathrm{Var}(\Delta_C) + 2a\,\mathrm{Cov}(T,\Delta_C)
= \mathrm{Var}(T) + a^2\mathrm{Var}(\Delta_C).
\end{align*}

Let
\begin{align*}
\beta := \frac{\mathrm{Cov}(T,C)}{\mathrm{Var}(C)} = \frac{\mathrm{Var}(T)}{\mathrm{Var}(C)}.
\end{align*}
We define the residual random variable $U := T-\beta C$. We compute its covariance with $C$:
\begin{align*}
\mathrm{Cov}(U,C)
&= \mathrm{Cov}(T-\beta C,\,C)
= \mathrm{Cov}(T,C) - \beta\,\mathrm{Var}(C)
= 0.
\end{align*}
Since $(U,C)$ is jointly Gaussian and uncorrelated, $U$ is independent of $C$.\footnote{For jointly Gaussian random variables, zero covariance implies independence. This is a standard fact about multivariate normal distributions.} This implies
\begin{align*}
\mathbb{E}[T\mid C]
&= \mathbb{E}[\beta C + U\mid C]
= \beta C + \mathbb{E}[U\mid C]
= \beta C + \mathbb{E}[U]
= \beta C,
\end{align*}
where we use $\mathbb{E}[U]=0$ because $\mathbb{E}[T]=\mathbb{E}[C]=0$.
Conditioning on $C=c$ yields $\mathbb{E}[T\mid C=c]=\beta c$.

We now express $\beta$ in terms of $\rho_C$. By definition,
\begin{align*}
\rho_C^2
= \frac{\mathrm{Cov}(T,C)^2}{\mathrm{Var}(T)\,\mathrm{Var}(C)}
= \frac{\mathrm{Var}(T)^2}{\mathrm{Var}(T)\,\mathrm{Var}(C)}
= \frac{\mathrm{Var}(T)}{\mathrm{Var}(C)}
= \beta.
\end{align*}
Substituting $\beta=\rho_C^2$ proves \eqref{eq:cond-mean-T-given-C}.

Next we compute the conditional variance. Since $T=\beta C + U$ with $U$ independent of $C$, we have
\begin{align*}
\mathrm{Var}(T\mid C)
= \mathrm{Var}(U\mid C)
= \mathrm{Var}(U)
= \mathrm{Var}(T-\beta C).
\end{align*}
Expanding this variance and using $\beta=\mathrm{Cov}(T,C)/\mathrm{Var}(C)$, we obtain
\begin{align*}
\mathrm{Var}(U)
&= \mathrm{Var}(T) + \beta^2\mathrm{Var}(C) - 2\beta\,\mathrm{Cov}(T,C)\\
&= \mathrm{Var}(T) + \frac{\mathrm{Cov}(T,C)^2}{\mathrm{Var}(C)} - 2\frac{\mathrm{Cov}(T,C)^2}{\mathrm{Var}(C)}\\
&= \mathrm{Var}(T) - \frac{\mathrm{Cov}(T,C)^2}{\mathrm{Var}(C)}
= \mathrm{Var}(T)\bigl(1-\rho_C^2\bigr),
\end{align*}
which gives \eqref{eq:cond-var-T-given-C}.

Finally, we use $C-T = a\Delta_C$ from \eqref{eq:citation-mixture}. Taking conditional expectations and using \eqref{eq:cond-mean-T-given-C} yields
\begin{align*}
\mathbb{E}[C-T\mid C=c] = c - \mathbb{E}[T\mid C=c] = \bigl(1-\rho_C^2\bigr)c,
\end{align*}
which proves \eqref{eq:cond-inflation}.
\end{proof}

Equation~\eqref{eq:cond-inflation} makes the winner's curse explicit. When $\rho_C$ is small, an extreme citation score $c$ can have a large expected inflation component $\mathbb{E}[C-T\mid C=c]$. At the same time, \eqref{eq:cond-var-T-given-C} shows that uncertainty about $T$ remains large when $\rho_C$ is small, even after we condition on $C=c$.

\begin{corollary}[Selection on the most cited paper]
\label{cor:top-cited}
Consider $N$ independent papers with i.i.d. pairs $(T_i,C_i)$ that satisfy Assumptions~\ref{assump:citation-decomp} and \ref{assump:citation-gaussian}. Let $i^*\in\arg\max_{i\in\{1,\dots,N\}} C_i$ and let $C_{\max}:=C_{i^*}$. Then for any $c\in\mathbb{R}$,
\begin{equation}
\label{eq:top-cited-mean}
\mathbb{E}[T_{i^*}\mid C_{\max}=c] = \rho_C^2\,c.
\end{equation}
\end{corollary}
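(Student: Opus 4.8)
The plan is to reduce the statement to the single–paper identity $\mathbb{E}[T\mid C=c]=\rho_C^2 c$ from Theorem~\ref{thm:winners-curse-citations}, equation~\eqref{eq:cond-mean-T-given-C}, using only the fact that the pairs $(T_i,C_i)$ are i.i.d.\ \emph{across} the $N$ papers. The conceptual point is that once we condition on the value $c$ of the winning citation score, the constraint ``this index beat all the others'' involves only the other papers' citation scores, which are independent of the winner's pair $(T_{i^*},C_{i^*})$; so that constraint carries no extra information about $T_{i^*}$ beyond what $C_{i^*}=c$ already supplies.

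First I would dispose of ties: under Assumption~\ref{assump:citation-gaussian} each $C_i$ is absolutely continuous, so almost surely the maximizer $i^*$ is unique, and $C_{\max}$ has density $f_{C_{\max}}(c)=N\,f_C(c)\,F_C(c)^{N-1}$, where $F_C$ and $f_C$ are the CDF and PDF of a single $C$. Since $f_C>0$ on $\mathbb{R}$ in the Gaussian proxy model, $f_{C_{\max}}(c)>0$ for every $c$, so the conditional expectation in~\eqref{eq:top-cited-mean} is well defined for all $c\in\mathbb{R}$. Then I would compute the relevant unnormalized conditional expectation by decomposing over which paper wins:
\[
\mathbb{E}\bigl[T_{i^*}\,\mathbf{1}\{C_{\max}\in dc\}\bigr]
=\sum_{j=1}^{N}\mathbb{E}\Bigl[T_j\,\mathbf{1}\{C_j\in dc\}\,\prod_{k\neq j}\mathbf{1}\{C_k<c\}\Bigr].
\]
By across-paper independence each summand factorizes as $\mathbb{E}[T_j\,\mathbf{1}\{C_j\in dc\}]\prod_{k\neq j}\mathbb{P}(C_k<c)$, with $\mathbb{E}[T_j\,\mathbf{1}\{C_j\in dc\}]=\mathbb{E}[T_j\mid C_j=c]\,f_C(c)\,dc=\rho_C^2 c\,f_C(c)\,dc$ by~\eqref{eq:cond-mean-T-given-C} and $\prod_{k\neq j}\mathbb{P}(C_k<c)=F_C(c)^{N-1}$. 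Summing over $j$ gives $\rho_C^2 c\cdot N f_C(c)F_C(c)^{N-1}\,dc=\rho_C^2 c\,f_{C_{\max}}(c)\,dc$, and dividing by $f_{C_{\max}}(c)\,dc$ yields exactly~\eqref{eq:top-cited-mean}. Equivalently, one may partition $\{C_{\max}=c\}$ into the events $\{i^*=j,\ C_{\max}=c\}$, observe that $\mathbb{E}[T_j\mid i^*=j,\ C_{\max}=c]=\rho_C^2 c$ for every $j$ by the same independence argument, and take the mixture over $j$, which collapses because every term equals $\rho_C^2 c$.

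The main obstacle is purely measure-theoretic: $\{C_{\max}=c\}$ is a null event, so the manipulations with ``$dc$'' must be made rigorous. I would do this either by reading the display as an identity of densities, justified by Fubini applied to $\mathbb{E}[T_{i^*}\,g(C_{\max})]$ for bounded test functions $g$ (using $\mathbb{E}|T_i|<\infty$ from the Gaussian assumption), or by the limiting route of conditioning on $\{C_{\max}\in[c,c+\varepsilon]\}$ and letting $\varepsilon\downarrow 0$, where positivity of $f_{C_{\max}}$ keeps the denominators bounded away from zero. Either way the substantive content is precisely the single-paper winner's-curse formula~\eqref{eq:cond-mean-T-given-C}; the independence across papers is exactly what makes the order-statistics selection contribute no additional bias, so selecting the most-cited of $N$ papers is as ``cursed'' as drawing one paper with the same citation score.
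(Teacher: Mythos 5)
Your proof is correct and follows essentially the same route as the paper: reduce to the single-paper identity $\mathbb{E}[T\mid C=c]=\rho_C^2\,c$ of Theorem~\ref{thm:winners-curse-citations} by exploiting that the event of beating the other papers involves only their citation scores, which are independent of the winner's pair. Your decomposition over the winning index together with the density $N f_C(c)F_C(c)^{N-1}$ of $C_{\max}$ is simply a more rigorous rendering of the paper's argument, which informally treats $(T_{i^*},C_{i^*})$ as independent of $\{C_j\}_{j\ne i^*}$ even though $i^*$ is a random index and the conditioning event is null, so your version in fact patches that looseness.
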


\begin{proof}
By definition, the event $\{C_{\max}=c\}$ implies $\{C_{i^*}=c\}$ and $\{C_j\le c\ \text{for all } j\ne i^*\}$. Since the papers are independent, the random pair $(T_{i^*},C_{i^*})$ is independent of the collection $\{C_j\}_{j\ne i^*}$. Therefore, conditioning on $\{C_j\le c\ \text{for all } j\ne i^*\}$ does not change the conditional distribution of $T_{i^*}$ given $C_{i^*}=c$. We thus have
\begin{align*}
\mathbb{E}[T_{i^*}\mid C_{\max}=c]
= \mathbb{E}[T_{i^*}\mid C_{i^*}=c]
= \rho_C^2\,c,
\end{align*}
where the last equality follows from Theorem~\ref{thm:winners-curse-citations}.
\end{proof}

Corollary~\ref{cor:top-cited} shows that selecting papers by extreme citation scores does not remove proxy inflation. Equation~\eqref{eq:top-cited-mean} implies that the expected latent value of the most cited paper scales like $\rho_C^2$ times its citation score. When $\rho_C<1$, the selected paper also satisfies $\mathbb{E}[C_{\max}-T_{i^*}\mid C_{\max}=c] = (1-\rho_C^2)c$ by \eqref{eq:cond-inflation}. This matches the qualitative phenomenon that some highly cited papers need not have correspondingly high latent scientific value.

\subsection{Population-Normalized Citation Scores}

To compare citation signals across fields, we need a normalization that accounts for field size and citation ``volume.'' We keep the model as simple as possible.

\begin{definition}[Population-normalized citation score]
\label{def:pop-normalized-citations}
Consider a field $f$. We let $Y_f$ denote the raw citation count of a paper measured over a fixed horizon. We let $\mu_f:=\mathbb{E}[Y_f]$ denote the field baseline mean over the same horizon. We let $s_f>0$ denote a scaling factor that captures population effects, such as the number of active citers or the number of papers that can generate citations in the horizon. We define the population-normalized citation score
\begin{equation}
\label{eq:pop-normalized-score}
C_f := \frac{Y_f-\mu_f}{s_f}.
\end{equation}
\end{definition}

We can now apply the same proxy decomposition to the normalized score.

\begin{assumption}[Field-specific citation proxy model]
\label{assump:field-citation-model}
For each field $f$, the normalized score $C_f$ satisfies
\begin{equation}
\label{eq:field-citation-mixture}
C_f = T_f + (1-q_f)\Delta_{C,f},
\end{equation}
where $T_f$ denotes latent scientific value in field $f$. We assume $\mathbb{E}[\Delta_{C,f}]=0$ and $\Delta_{C,f}$ is independent of $T_f$.
\end{assumption}

We define $r_{C,f}:=\mathrm{Var}(\Delta_{C,f})/\mathrm{Var}(T_f)$ and $\rho_{C,f}:=\mathrm{Corr}(C_f,T_f)$. By Corollary~\ref{thm:citation-coupling}, we have
\begin{align*}
\rho_{C,f} = \bigl(1+(1-q_f)^2 r_{C,f}\bigr)^{-1/2}.
\end{align*}

\subsection{Expected Scientific Value Given Raw Citations}

We can translate the theories back to raw citation units.

\begin{assumption}[Field-specific Gaussian proxy model]
\label{assump:field-citation-gaussian}
In addition to Assumption~\ref{assump:field-citation-model}, we assume $T_f$ and $\Delta_{C,f}$ are independent Gaussian random variables with mean zero and finite variances.
\end{assumption}

\begin{theorem}[Conditional value in raw citation units]
\label{thm:raw-citation-conditional-value}
Under Assumptions~\ref{assump:field-citation-model} and \ref{assump:field-citation-gaussian}, for any field $f$ and any $y\in\mathbb{R}$ we have
\begin{equation}
\label{eq:ET-given-raw}
\mathbb{E}[T_f\mid Y_f=y] = \rho_{C,f}^2\,\frac{y-\mu_f}{s_f}.
\end{equation}
\end{theorem}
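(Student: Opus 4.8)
The plan is to reduce this statement to the field-specific version of Theorem~\ref{thm:winners-curse-citations} via a deterministic change of variables. The key observation is that Definition~\ref{def:pop-normalized-citations} makes $C_f = (Y_f - \mu_f)/s_f$ an invertible affine transformation of $Y_f$, since $s_f > 0$ and $\mu_f$ are fixed (non-random) field-level constants. Hence the $\sigma$-algebra generated by $Y_f$ coincides with that generated by $C_f$, and conditioning on $\{Y_f = y\}$ is identical to conditioning on $\{C_f = (y-\mu_f)/s_f\}$. I would state this equivalence explicitly as the first step.

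Next I would invoke the field-specific analog of Theorem~\ref{thm:winners-curse-citations}. Assumptions~\ref{assump:field-citation-model} and \ref{assump:field-citation-gaussian} are exactly the field-indexed counterparts of Assumptions~\ref{assump:citation-decomp} and \ref{assump:citation-gaussian}: $C_f = T_f + (1-q_f)\Delta_{C,f}$ with $T_f$, $\Delta_{C,f}$ independent mean-zero Gaussians. So the proof of Theorem~\ref{thm:winners-curse-citations} applies verbatim with $(T, C, q, \Delta_C, \rho_C)$ replaced by $(T_f, C_f, q_f, \Delta_{C,f}, \rho_{C,f})$, yielding $\mathbb{E}[T_f \mid C_f = c] = \rho_{C,f}^2\, c$ for every $c \in \mathbb{R}$, where $\rho_{C,f} = (1 + (1-q_f)^2 r_{C,f})^{-1/2}$ as recorded just before the theorem. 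I would cite Theorem~\ref{thm:winners-curse-citations} rather than redo the joint-Gaussian conditioning argument.

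The final step is substitution: set $c = (y - \mu_f)/s_f$ and combine the two displays to get
\[
\mathbb{E}[T_f \mid Y_f = y]
= \mathbb{E}\!\left[T_f \;\middle|\; C_f = \tfrac{y-\mu_f}{s_f}\right]
= \rho_{C,f}^2 \,\frac{y-\mu_f}{s_f},
\]
which is \eqref{eq:ET-given-raw}. There is essentially no hard technical obstacle here; the only point requiring a word of care is justifying that the conditioning is unchanged under the affine reparametrization (a measurability remark, valid precisely because $s_f > 0$ and $\mu_f$, $s_f$ are deterministic). If one wanted a fully self-contained argument instead of citing the earlier theorem, the main work would be recomputing $\mathrm{Cov}(T_f, C_f) = \mathrm{Var}(T_f)$ and $\mathrm{Var}(C_f) = \mathrm{Var}(T_f) + (1-q_f)^2\mathrm{Var}(\Delta_{C,f})$ and noting the regression coefficient equals $\rho_{C,f}^2$ — but this is routine and already done in the proof of Theorem~\ref{thm:winners-curse-citations}.
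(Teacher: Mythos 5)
Your proposal matches the paper's own proof: both reduce the statement to Theorem~\ref{thm:winners-curse-citations} by observing that conditioning on $Y_f=y$ is the same as conditioning on $C_f=(y-\mu_f)/s_f$, then substitute $c=(y-\mu_f)/s_f$. Your extra remark about the deterministic invertible affine reparametrization just makes explicit a step the paper leaves implicit, so the argument is correct and essentially identical.
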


\begin{proof}
We condition on $Y_f=y$. By Definition~\ref{def:pop-normalized-citations}, this is equivalent to conditioning on $C_f=(y-\mu_f)/s_f$.

By Assumption~\ref{assump:field-citation-gaussian}, the pair $(T_f,C_f)$ is jointly Gaussian. We can therefore apply Theorem~\ref{thm:winners-curse-citations} to the pair $(T_f,C_f)$, which yields
\begin{align*}
\mathbb{E}[T_f\mid C_f=c] = \rho_{C,f}^2\,c
\quad\text{for all } c\in\mathbb{R}.
\end{align*}
We substitute $c=(y-\mu_f)/s_f$ and we obtain \eqref{eq:ET-given-raw}.
\end{proof}

Equation~\eqref{eq:ET-given-raw} highlights two distinct sources of cross-field incomparability. First, the coupling factor $\rho_{C,f}^2$ can vary across fields. Second, the population scale $s_f$ controls how many raw citations correspond to one unit of the normalized score.

\subsection{A Population-Normalized Citation Exchange Rate}

We now define a simple exchange rate between citations in two fields.

\begin{definition}[Citation exchange rate]
\label{def:citation-exchange-rate}
Consider two fields $A$ and $B$. We define the marginal expected value per raw citation as
\begin{align*}
\kappa_f := \frac{\partial}{\partial y}\,\mathbb{E}[T_f\mid Y_f=y].
\end{align*}
We define the exchange rate from $A$ to $B$ as the number $\mathcal{E}_{A\to B}$ such that one additional citation in field $B$ has the same marginal expected value as $\mathcal{E}_{A\to B}$ additional citations in field $A$.
\end{definition}

\begin{theorem}[Closed form exchange rate]
\label{thm:exchange-rate}
Under the assumptions of Theorem~\ref{thm:raw-citation-conditional-value}, the marginal expected value per citation is constant in $y$ and equals
\begin{equation}
\label{eq:kappa-f}
\kappa_f = \frac{\rho_{C,f}^2}{s_f}.
\end{equation}
Consequently, the exchange rate from $A$ to $B$ equals
\begin{equation}
\label{eq:exchange-rate}
\mathcal{E}_{A\to B} = \frac{\kappa_B}{\kappa_A} = \frac{s_A}{s_B}\,\Bigl(\frac{\rho_{C,B}}{\rho_{C,A}}\Bigr)^2.
\end{equation}
\end{theorem}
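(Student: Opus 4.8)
The plan is to differentiate the conditional-mean formula supplied by Theorem~\ref{thm:raw-citation-conditional-value} and then assemble the exchange rate as a ratio of the resulting marginal values. First I would recall that, under the stated assumptions, Theorem~\ref{thm:raw-citation-conditional-value} gives $\mathbb{E}[T_f\mid Y_f=y]=\rho_{C,f}^2\,(y-\mu_f)/s_f$, which is an affine function of $y$. Differentiating with respect to $y$ annihilates the constant term $-\rho_{C,f}^2\mu_f/s_f$ and leaves $\kappa_f=\rho_{C,f}^2/s_f$; since this derivative does not depend on $y$, the marginal expected value per citation is genuinely constant in $y$, which is both the ``constant in $y$'' claim and equation~\eqref{eq:kappa-f}. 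I would note in passing that $\kappa_f>0$, since $\rho_{C,f}>0$ (it equals $(1+(1-q_f)^2 r_{C,f})^{-1/2}>0$ by Corollary~\ref{thm:citation-coupling}) and $s_f>0$ by Definition~\ref{def:pop-normalized-citations}; this guarantees the ratios below are well defined.

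Next I would unpack Definition~\ref{def:citation-exchange-rate}: $\mathcal{E}_{A\to B}$ is the number of field-$A$ citations whose combined marginal expected value matches that of one field-$B$ citation. Because marginal value is linear in the number of citations (the slope $\kappa_f$ is constant, as just established), this matching condition is exactly $\mathcal{E}_{A\to B}\,\kappa_A=\kappa_B$, hence $\mathcal{E}_{A\to B}=\kappa_B/\kappa_A$. Substituting $\kappa_B=\rho_{C,B}^2/s_B$ and $\kappa_A=\rho_{C,A}^2/s_A$ and simplifying $(\rho_{C,B}^2/s_B)\big/(\rho_{C,A}^2/s_A)$ yields $(s_A/s_B)(\rho_{C,B}/\rho_{C,A})^2$, which is equation~\eqref{eq:exchange-rate}.

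There is no substantive obstacle: the theorem is an immediate corollary of Theorem~\ref{thm:raw-citation-conditional-value}, and the only step requiring a moment's care is the directionality in Definition~\ref{def:citation-exchange-rate}. One must confirm that ``one citation in $B$ equals $\mathcal{E}_{A\to B}$ citations in $A$'' translates to $\mathcal{E}_{A\to B}=\kappa_B/\kappa_A$ rather than its reciprocal; this follows because the defining relation equates $\mathcal{E}_{A\to B}$ units of the $A$-side slope with one unit of the $B$-side slope, so the larger the per-citation value $\kappa_B$ in $B$ relative to $\kappa_A$ in $A$, the more $A$-citations are needed. I would also state the affineness of the conditional mean explicitly rather than leave it implicit, since it is precisely what makes ``marginal expected value per citation'' well defined independently of the operating point $y$, and therefore what makes the exchange rate a single field-level constant rather than a $y$-dependent quantity.
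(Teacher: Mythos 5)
Your proposal is correct and follows essentially the same route as the paper: differentiate the affine conditional-mean formula from Theorem~\ref{thm:raw-citation-conditional-value} to obtain the constant slope $\kappa_f=\rho_{C,f}^2/s_f$, then solve the defining relation $\kappa_A\,\mathcal{E}_{A\to B}=\kappa_B$ and substitute. Your added remarks on positivity of $\kappa_f$ and the directionality of the exchange-rate definition are fine but do not change the argument.
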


\begin{proof}
We start from \eqref{eq:ET-given-raw}, which implies
\begin{align*}
\mathbb{E}[T_f\mid Y_f=y] = \rho_{C,f}^2\,\frac{y-\mu_f}{s_f}.
\end{align*}
Differentiating both sides with respect to $y$ yields
\begin{align*}
\kappa_f = \frac{\partial}{\partial y}\,\mathbb{E}[T_f\mid Y_f=y] = \frac{\rho_{C,f}^2}{s_f},
\end{align*}
which proves \eqref{eq:kappa-f}.

By Definition~\ref{def:citation-exchange-rate}, $\mathcal{E}_{A\to B}$ satisfies
\begin{align*}
\kappa_A\,\mathcal{E}_{A\to B} = \kappa_B.
\end{align*}
We solve for $\mathcal{E}_{A\to B}$ and we substitute \eqref{eq:kappa-f} for $A$ and $B$. This yields
\begin{align*}
\mathcal{E}_{A\to B}
= \frac{\rho_{C,B}^2/s_B}{\rho_{C,A}^2/s_A}
= \frac{s_A}{s_B}\,\Bigl(\frac{\rho_{C,B}}{\rho_{C,A}}\Bigr)^2,
\end{align*}
which is \eqref{eq:exchange-rate}.
\end{proof}

Equation~\eqref{eq:exchange-rate} separates population effects from truth-coupling effects. The ratio $s_A/s_B$ captures how raw citation volume differs across the two fields. The ratio $(\rho_{C,B}/\rho_{C,A})^2$ captures how strongly citations track latent value in each field.

\subsection{An Illustrative Comparison}

We close with a simple parameterization that illustrates how large cross-field exchange rates can be. We do not claim that the numbers below match any particular discipline. We instead choose them to represent two qualitative regimes, and we choose them so that each knob in the model has a clear interpretation.

The key quantity is
\begin{align*}
K_f = (1-q_f)^2 r_{C,f} = (1-q_f)^2\,\frac{\mathrm{Var}(\Delta_{C,f})}{\mathrm{Var}(T_f)}.
\end{align*}
A field can have a large $K_f$ for three distinct reasons. Verification can be rare, so $q_f$ is small. Signal can shrink, so $\mathrm{Var}(T_f)$ is small. Proxy amplification can grow, so $\mathrm{Var}(\Delta_{C,f})$ is large. Any of these effects can push the field into a low-coupling regime.

Consider two fields $A$ and $B$. We first set verification pressure. Under our queue model we have $q_f=\min\{1,1/\Lambda_f\}$. We set $\Lambda_A=10$ and $\Lambda_B=2$, so $q_A=0.1$ and $q_B=0.5$.

We next set the proxy-to-truth ratio $r_{C,f}$. We set $r_{C,A}=100$ and $r_{C,B}=5$. These are not estimates. We choose them to reflect two levels of noise dominance.
The choice $r_{C,A}=100$ means that proxy variance is two orders of magnitude larger than signal variance. This can arise from two moderate effects, namely signal shrinkage by a factor of $10$ together with proxy amplification by a factor of $10$.
The choice $r_{C,B}=5$ means that proxy variance is only a few times larger than signal variance, which represents a milder proxy regime.

For additional intuition, we can view $r_{C,f}$ as an effective number of independent proxy channels. If we write
\begin{align*}
\Delta_{C,f} = \sum_{j=1}^{m_f} Z_{f,j},
\end{align*}
where the variables $Z_{f,1},\ldots,Z_{f,m_f}$ are independent, have mean zero, are independent of $T_f$, and satisfy $\mathrm{Var}(Z_{f,j})=\mathrm{Var}(T_f)$, then
\begin{align*}
\mathrm{Var}(\Delta_{C,f})
&= \mathrm{Var}\Bigl(\sum_{j=1}^{m_f} Z_{f,j}\Bigr)
= \sum_{j=1}^{m_f}\mathrm{Var}(Z_{f,j})
= m_f\,\mathrm{Var}(T_f),
\end{align*}
so $r_{C,f}=m_f$. Under this interpretation, $r_{C,A}=100$ corresponds to $m_A=100$ effective proxy channels, while $r_{C,B}=5$ corresponds to $m_B=5$.

Given $q_A,q_B$ and $r_{C,A},r_{C,B}$, we obtain
\begin{align*}
K_A &= (1-q_A)^2 r_{C,A} = 0.9^2\cdot 100 = 81,\\
K_B &= (1-q_B)^2 r_{C,B} = 0.5^2\cdot 5 = \frac{5}{4}.
\end{align*}
Therefore
\begin{align*}
\rho_{C,A} &= \bigl(1+K_A\bigr)^{-1/2} = \frac{1}{\sqrt{82}} \approx 0.11,\\
\rho_{C,B} &= \bigl(1+K_B\bigr)^{-1/2} = \frac{2}{3} \approx 0.67.
\end{align*}

Finally, we set the population normalization. We take $s_f$ to be proportional to the number of active citers in field $f$. If field $A$ has five times as many active citers as field $B$, then $s_A/s_B=5$.

By Theorem~\ref{thm:exchange-rate}, one additional citation in field $B$ has the same marginal expected value as
\begin{align*}
\mathcal{E}_{A\to B}
&= \frac{s_A}{s_B}\,\Bigl(\frac{\rho_{C,B}}{\rho_{C,A}}\Bigr)^2\\
&= \frac{s_A}{s_B}\,\frac{1+K_A}{1+K_B}
= 5\cdot \frac{82}{9/4}
= \frac{1640}{9}
\approx 182
\end{align*}
additional citations in field $A$. This example is meant to build intuition. It shows that large exchange rates arise whenever one field operates in a low-coupling regime with $K_f$ much larger than $1$, while another field operates in a moderate-coupling regime with $K_f$ comparable to $1$.

\section{Empirical Agenda}
\label{sec:empirical}

Our theory introduces quantities that we can estimate in real venues.
This section sketches a measurement program.
We emphasize feasibility over perfection.

\subsection{Estimating Verification Pressure $\vpressure$}

Equation~\eqref{eq:Lambda} defines $\vpressure = R\Ce/\Be$.
We can estimate each component.

We can estimate $R$ from submission counts or preprint inflow.
OpenReview provides submission volumes for many conferences, and arXiv provides category-level inflows.
We can refine $R$ by counting claims rather than papers, for example by extracting the number of headline results, benchmarks, or theorem statements.

We can estimate $\Be$ by counting reviewer assignments and by estimating available review time.
A coarse estimator multiplies the number of reviewers by an assumed hours-per-review.
A better estimator uses self-reported time, rebuttal length, or meta-review depth as proxies for attention.
Artifact evaluation programs provide another signal because they create explicit verification tasks.

We can estimate $\Ce$ using time-to-verify studies.
For computational papers, we can log the compute and engineering effort needed to reproduce a result from provided artifacts.
For theoretical work, we can log the time needed to check proofs or to find missing assumptions.
We can also decompose $\Ce=c/\kappa$ by measuring how often evidence resolves disputes.
For example, we can measure how often an artifact rerun changes a decision.
This is a direct proxy for $\kappa$.

\subsection{Estimating Signal Shrinkage and $\snr$}

We defined $\snr = \Var(\Delta)/\Var(\Truth)$ in \eqref{eq:snr}.
Direct truth measurement is hard, but we can estimate components.

We can estimate $\Var(\Delta)$ using instability and sensitivity metrics.
In machine learning, seed variance, hyperparameter sensitivity, and data leakage are measurable.
In experimental sciences, batch effects and measurement error provide analogs.
We can also estimate proxy noise by comparing claimed improvements with improvements that survive a standardized audit.

We can approximate $\Var(\Truth)$ using long-horizon outcomes.
One approach uses replication success.
Another uses post-publication robustness, such as whether a method remains competitive under later, stronger baselines.
A third uses ``disruption'' or citation topology metrics, although these remain indirect.

Saturation should show up as a decline in effect sizes on mature benchmarks.
For example, if a benchmark metric approaches a ceiling, then the year-to-year spread of improvements shrinks.
This reduces $\Var(\Truth)$ and increases $\snr$ even if proxy noise stays constant.

\subsection{Testing the Phase Transition Hypothesis}

The model predicts that proxy sovereignty becomes likely when $\vpressure$ is large and $\snr$ is large.
We can test this by comparing venues and subfields.

A practical strategy is to assemble a dataset of papers with artifacts, benchmark claims, and reproducibility outcomes.
We then measure proxies for $q$ and for $\snr$ and test whether these predict later instability, retractions, or rapid obsolescence.
We can also test whether interventions that reduce $\Ce$ or increase $\Be$ increase stability.

LLM-mediated writing creates a natural experiment.
If LLMs increase $R$ without increasing $\Be$, the theory predicts that truth-coupling should decline unless the community also scales verification-first mechanisms.
Recent measurements of AI-modified review text show that large-scale shifts already occur \citep{liang2024monitoring}.
We can build on such measurements to study downstream effects on coupling and on robustness.

\subsection{What Would Falsify Our Model}

The theory would be wrong or incomplete if we observe sustained high truth-coupling in regimes with high $\vpressure$ and high $\snr$.
Such cases would imply that communities found additional coupling mechanisms not captured in our abstraction.
Examples could include strong informal replication norms, industry testbeds, or tight proof cultures that effectively increase $\kappa$.
These cases would be scientifically valuable because they would suggest new design patterns for verification-first peer review.

\section{Case Studies: Mature Engineering Domains as Extreme Regimes}
\label{app:cases}

This appendix uses the $(R,\Ce,\Be)$ lens to interpret several mature engineering domains.
These examples are not meant as judgments about individuals.
They illustrate how a field's technical structure can create proxy sovereignty even when researchers act rationally.

\subsection{Wireless Communications and ``The Simulator as the World''}

Wireless communications provides an instructive extreme case.
The real-world object is a complex physical and socio-technical system.
To validate a claim about a new protocol at scale, we often need expensive testbeds, custom hardware, and deployment conditions.
This makes $c$ large.
At the same time, real environments are noisy and context-dependent, which can make $\kappa$ smaller than one.
As a result, $\Ce=c/\kappa$ can be very large.

The community therefore uses simulation proxies.
Simulations reduce $c$, but they can also reduce $\kappa$ if the simulator does not faithfully capture the deployment regime.
When the proxy becomes the evaluation standard, the community enters a proxy-sovereign loop.
Papers optimize for the simulator rather than for the real system.
As the domain matures and approaches physical limits, true improvements shrink.
This reduces $\Var(\Truth)$ and increases $\snr$, which further encourages proxy reliance.
In this sense, communications is an extreme realization of the phase diagram in Figure~\ref{fig:phase}.

\subsection{Autonomous Driving and Safety-Critical Claims}

Autonomous driving has high verification friction for a different reason.
Claims often concern rare events.
Rare-event validation requires large-scale testing or strong causal models.
Both increase $c$.
Because accidents are sparse, evidence can remain ambiguous, which reduces $\kappa$.
This raises $\Ce$.

In response, the community relies on proxy benchmarks and curated datasets.
This can be useful, but it also creates strong incentives to overfit to proxy distributions.
A verification-first review process would invest in stress tests and distribution shift audits that approximate the rare-event regime.
This directly targets $\kappa$ and reduces $\Ce$.

\subsection{Reinforcement Learning and Sensitivity-Driven Proxy Noise}

In reinforcement learning, results often exhibit large variance across random seeds and implementation details.
This increases $\Var(\Delta)$ even when the underlying idea is sound.
If reviewers lack bandwidth to run reruns, $q$ is small and the system selects based on noisy maxima, which increases $\snr$.
A verification-first intervention is straightforward in this domain: require seed sweeps and publish dispersion.
This reduces $\Var(\Delta)$ without demanding new theory.

\subsection{Materials Science and Long Experimental Cycles}

In experimental materials research, verification can require long fabrication cycles and specialized equipment.
This increases $c$ and also creates barriers to independent replication.
The result is high verification pressure even at moderate claim rates.
If publication incentives push $R$ upward, proxy sovereignty can emerge through the selection of results that appear strong in limited experimental settings.
Here, verification-first design may rely more on preregistration, shared protocols, and multi-site replication consortia.
These mechanisms increase $\Be$ and $\kappa$.

\subsection{A Unifying Interpretation}

Across these domains, we observe the same structure.
When verification is expensive and true gains shrink, communities adopt low-cost proxies to keep the system running.
Once proxies decide careers, optimization pressure shifts toward proxies.
This is a stable equilibrium for individuals, but it can reduce truth-coupling at the system level.
AI can either accelerate the drift, by scaling proxy optimization and claim production, or help resist it, by scaling decisive evidence.

\section{Limitations}
\label{sec:limitations}

We proposed a phase transition view of scientific production driven by verification pressure and signal shrinkage.
This view yields clear design implications for AI-assisted review, but it also has limitations.

\subsection{Not all proxy use is bad}

Proxy signals are unavoidable.
Even in truth-sovereign disciplines, humans rely on heuristics to allocate attention.
Our claim is narrower.
Proxy sovereignty becomes harmful when proxy optimization dominates and when truth-coupling becomes low.
In that regime, Goodhart drift becomes likely and Theorem~\ref{thm:effort} predicts that honest truth effort becomes privately irrational.

\subsection{Equity and access}

Verification-first mechanisms can increase burdens on authors.
Compute-heavy audits may disadvantage low-resource groups.
We therefore need verification artifacts that are cheap and diagnostic.
We also need shared infrastructure, standardized audit suites, and community datasets.
These investments can increase $\Be$ without requiring each author to own large compute.

\subsection{The risk of verification theater}

A warning from the audit literature is that verification can itself become ritual \citep{power1997audit}.
If communities treat checklists as proxies, they can recreate proxy sovereignty in a new form.
Verification-first AI must keep evidence grounded.
A smoke test that runs and produces logs increases fidelity.
A checklist that asks for a justification paragraph can be gamed, as the NeurIPS 2024 experiment suggests \citep{goldberg2024checklist}.

\subsection{Modeling limitations}

Our model is intentionally simple.
We used a scalar truth variable and a scalar score.
Real peer review involves multi-dimensional judgments and also serves prioritization and feedback, in addition to social dynamics.
We also treated verification as random service with a fixed bandwidth.
In practice, committees route attention strategically and fields differ in their social structure.
We see our model as a minimal backbone that clarifies which quantities matter and how they interact.
Future work can extend it with network effects, strategic reviewer behavior, and heterogeneous paper types.

\end{appendix}

\bibliographystyle{plainnat}
\bibliography{references}

@article{bloom2020ideas,
  title={Are ideas getting harder to find?},
  author={Bloom, Nicholas and Jones, Charles I and Van Reenen, John and Webb, Michael},
  journal={American Economic Review},
  volume={110},
  number={4},
  pages={1104--1144},
  year={2020},
  publisher={American Economic Association 2014 Broadway, Suite 305, Nashville, TN 37203}
}

@incollection{goodhart1975problems,
  title={Problems of monetary management: the UK experience},
  author={Goodhart, Charles AE},
  booktitle={Monetary theory and practice: The UK experience},
  pages={91--121},
  year={1984},
  publisher={Springer}
}

@article{campbell1979assessing,
  title={Assessing the impact of planned social change},
  author={Campbell, Donald T},
  journal={Evaluation and program planning},
  volume={2},
  number={1},
  pages={67--90},
  year={1979},
  publisher={Elsevier}
}

@article{manheim2019categorizing,
  title = {Categorizing Variants of {G}oodhart's Law},
  author = {Manheim, David and Garrabrant, Scott},
  journal = {arXiv preprint},
  year = {2019},
}

@book{power1997audit,
  title = {The Audit Society: Rituals of Verification},
  author = {Power, Michael},
  publisher = {Oxford University Press},
  year = {1997}
}

@book{strathern2000audit,
  title={Audit cultures},
  author={Strathern, Marilyn},
  volume={146},
  year={2000},
  publisher={London: Routledge}
}

@book{muller2018tyranny,
  title={The tyranny of metrics},
  author={Muller, Jerry},
  year={2018},
  publisher={Princeton University Press}
}

@article{sims2003rational,
  title={Implications of rational inattention},
  author={Sims, Christopher A},
  journal={Journal of monetary Economics},
  volume={50},
  number={3},
  pages={665--690},
  year={2003},
  publisher={Elsevier}
}

@article{holmstrom1991multitask,
  title={Multitask principal--agent analyses: Incentive contracts, asset ownership, and job design},
  author={Holmstrom, Bengt and Milgrom, Paul},
  journal={The Journal of Law, Economics, and Organization},
  volume={7},
  number={special\_issue},
  pages={24--52},
  year={1991},
  publisher={Oxford University Press}
}

@article{smaldino2016natural,
  title={The natural selection of bad science},
  author={Smaldino, Paul E and McElreath, Richard},
  journal={Royal Society open science},
  volume={3},
  number={9},
  pages={160384},
  year={2016},
  publisher={The Royal Society}
}

@article{ioannidis2005most,
  title={Why most published research findings are false},
  author={Ioannidis, John PA},
  journal={PLoS medicine},
  volume={2},
  number={8},
  pages={e124},
  year={2005},
  publisher={Public Library of Science}
}

@inproceedings{zhou-etal-2024-llm,
  title = {Is {LLM} a Reliable Reviewer? A Comprehensive Evaluation of {LLM} on Automatic Paper Reviewing Tasks},
  author = {Zhou, Ruiyang and Chen, Lu and Yu, Kai},
  booktitle = {Proceedings of LREC-COLING 2024},
  month = may,
  year = {2024},
  address = {Torino, Italia},
  publisher = {ELRA and ICCL},
  pages = {9340--9351}
}

@article{thelwall2025predictive,
  title={Evaluating the predictive capacity of ChatGPT for academic peer review outcomes across multiple platforms},
  author={Thelwall, Mike and Yaghi, Abdallah},
  journal={Scientometrics},
  pages={1--23},
  year={2025},
  publisher={Springer}
}

@article{goldberg2024checklist,
  title={Usefulness of LLMs as an Author Checklist Assistant for Scientific Papers: NeurIPS'24 Experiment},
  author={Goldberg, Alexander and Ullah, Ihsan and Khuong, Thanh Gia Hieu and Rachmat, Benedictus Kent and Xu, Zhen and Guyon, Isabelle and Shah, Nihar B},
  journal={arXiv preprint},
  year={2024}
}

@inproceedings{liang2024monitoring,
author = {Liang, Weixin and Izzo, Zachary and Zhang, Yaohui and Lepp, Haley and Cao, Hancheng and Zhao, Xuandong and Chen, Lingjiao and Ye, Haotian and Liu, Sheng and Huang, Zhi and McFarland, Daniel A. and Zou, James Y.},
title = {Monitoring AI-modified content at scale: a case study on the impact of ChatGPT on AI conference peer reviews},
year = {2024},
publisher = {JMLR.org},
booktitle = {Proceedings of ICML 2024},
articleno = {1192},
numpages = {46},
location = {Vienna, Austria},
series = {ICML'24}
}

@article{akella2025prereview,
  title={Pre-review to Peer review: Pitfalls of Automating Reviews using Large Language Models},
  author={Akella, Akhil Pandey and Siravuri, Harish Varma and Rohatgi, Shaurya},
  journal={arXiv preprint},
  year={2025}
}

@article{shin2025automatic,
  title={Automatically evaluating the paper reviewing capability of large language models},
  author={Shin, Hyungyu and Tang, Jingyu and Lee, Yoonjoo and Kim, Nayoung and Lim, Hyunseung and Cho, Ji Yong and Hong, Hwajung and Lee, Moontae and Kim, Juho},
  journal={arXiv preprint},
  pages={arXiv--2502},
  year={2025}
}

@inproceedings{kang-etal-2018-dataset,
    title = "A Dataset of Peer Reviews ({P}eer{R}ead): Collection, Insights and {NLP} Applications",
    author = "Kang, Dongyeop  and
      Ammar, Waleed  and
      Dalvi, Bhavana  and
      van Zuylen, Madeleine  and
      Kohlmeier, Sebastian  and
      Hovy, Eduard  and
      Schwartz, Roy",
    editor = "Walker, Marilyn  and
      Ji, Heng  and
      Stent, Amanda",
    booktitle = "Proceedings of ACL 2018",
    month = jun,
    year = "2018",
    address = "New Orleans, Louisiana",
    publisher = "Association for Computational Linguistics",
    pages = "1647--1661",
}

@inproceedings{bao2021acceptance,
  title={Predicting paper acceptance via interpretable decision sets},
  author={Bao, Peng and Hong, Weihui and Li, Xuanya},
  booktitle={Companion Proceedings of the Web Conference 2021},
  pages={461--467},
  year={2021}
}

@book{asimov1951foundation,
  title     = {Foundation},
  author    = {Asimov, Isaac},
  year      = {1951},
  publisher = {Gnome Press},
  address   = {New York}
}

@book{tainter1988collapse,
  title={The collapse of complex societies},
  author={Tainter, Joseph},
  year={1988},
  publisher={Cambridge university press}
}

@article{turchin2016ages,
  title={Ages of Discord: A Structural Demographic Analysis of American History},
  author={Goldstone, Jack A},
  journal={Evolutionary Studies in Imaginative Culture},
  volume={2},
  number={1},
  pages={137--142},
  year={2018},
  publisher={Academic Studies Press}
}

@article{scicoqa2026,
      title={SciCoQA: Quality Assurance for Scientific Paper--Code Alignment}, 
      author={Tim Baumgärtner and Iryna Gurevych},
      journal={arXiv preprint},
      year={2026}
}

@article{hao2026artificial,
  title={Artificial intelligence tools expand scientists’ impact but contract science’s focus},
  author={Hao, Qianyue and Xu, Fengli and Li, Yong and Evans, James},
  journal={Nature},
  pages={1--7},
  year={2026},
  publisher={Nature Publishing Group UK London}
}

@article{afzal2025beyond,
  title={Beyond ``Not Novel Enough'': Enriching Scholarly Critique with LLM-Assisted Feedback},
  author={Afzal, Osama Mohammed and Nakov, Preslav and Hope, Tom and Gurevych, Iryna},
  journal={arXiv preprint},
  year={2025}
}

@inproceedings{dycke2025stricta,
  title={STRICTA: Structured reasoning in critical text assessment for peer review and beyond},
  author={Dycke, Nils and Ze{\v{c}}evi{\'c}, Matej and Kuznetsov, Ilia and Suess, Beatrix and Kersting, Kristian and Gurevych, Iryna},
  booktitle={Proceedings of ACL 2025},
  pages={22687--22727},
  year={2025}
}

@article{zou2024chatgpt,
  title={ChatGPT is transforming peer review—how can we use it responsibly},
  author={Zou, James},
  journal={Nature},
  volume={635},
  number={8037},
  pages={10--10},
  year={2024},
  publisher={Nature}
}

@article{dycke2025automatic,
  title={Automatic reviewers fail to detect faulty reasoning in research papers: A new counterfactual evaluation framework},
  author={Dycke, Nils and Gurevych, Iryna},
  journal={arXiv preprint},
  year={2025}
}

@article{dutta2025problem,
  title={Problem Solving Through Human--AI Preference-based Cooperation},
  author={Dutta, Subhabrata and Kaufmann, Timo and Glava{\v{s}}, Goran and Habernal, Ivan and Kersting, Kristian and Kreuter, Frauke and Mezini, Mira and Gurevych, Iryna and H{\"u}llermeier, Eyke and Schuetze, Hinrich},
  journal={Computational Linguistics},
  pages={1--36},
  year={2025},
  publisher={MIT Press 255 Main Street, 9th Floor, Cambridge, Massachusetts 02142, USA~…}
}

@article{liu2023reviewergpt,
  title={Reviewergpt? an exploratory study on using large language models for paper reviewing},
  author={Liu, Ryan and Shah, Nihar B},
  journal={arXiv preprint arXiv:2306.00622},
  year={2023}
}

@article{xi2025flaws,
  title={Flaws: A benchmark for error identification and localization in scientific papers},
  author={Xi, Sarina and Rao, Vishisht and Payan, Justin and Shah, Nihar B},
  journal={arXiv preprint arXiv:2511.21843},
  year={2025}
}

@article{shah2024survey,
    title={An Overview of Challenges, Experiments,
and Computational Solutions in Peer Review
(Extended Version)},
    author={Shah, B. Nihar},
    journal={preprint},
    year={2025},
    url={https://www.cs.cmu.edu/~nihars/preprints/SurveyPeerReview.pdf}
}

\end{document}